\newtheorem{theorem}{Theorem}
\newtheorem{lemma}{Lemma}
\newtheorem{assumption}{Assumption}
\newcommand{\revq}[1]{{\color{red}#1}}
\newcommand{\revq}[1]{#1}
\newcommand{\rev}[1]{{\color{blue}#1}} 
\newcommand{\com}[1]{\textbf{\color{red}(COMMENT: #1)}} 
\newcommand{\clar}[1]{\textbf{\color{green}(NEED more work: #1)}}
\newcommand{\rev}[1]{#1}
\newcommand{\com}[1]{}
\newcommand{\clar}[1]{}
\begin{document}
\title{Online Learning in Decentralized Multiuser Resource Sharing Problems}
\author{\IEEEauthorblockN{Cem Tekin,~\IEEEmembership{Student Member,~IEEE}, Mingyan Liu,~\IEEEmembership{Senior Member,~IEEE}\\
\thanks{A preliminary version of this work appeared in MILCOM 12.}
\thanks{C. Tekin and M. Liu are with the Electrical Engineering and Computer Science Department, University of Michigan, Ann Arbor, MI 48105, USA,
        {\{cmtkn,mingyan\}@eecs.umich.edu}}
}}

\maketitle

\begin{abstract}
In this paper, we consider the general scenario of resource sharing in a decentralized
system when the resource rewards/qualities are time-varying and unknown to the users, and using the same
resource by multiple users leads to reduced quality due to
resource sharing. Firstly, we consider a user-independent reward model with no communication between the users, where a user gets feedback about the congestion level in the resource it uses. Secondly, we consider user-specific rewards and allow costly communication between the users. The users have a cooperative goal of achieving the highest system utility. There are multiple obstacles in achieving this goal such as the decentralized nature of the system, unknown resource qualities, communication, computation and switching costs. We propose distributed learning algorithms with logarithmic regret with respect to the optimal allocation. Our logarithmic regret result holds under both i.i.d. and Markovian reward models, as well as under communication, computation and switching costs.
\end{abstract}

\begin{IEEEkeywords}
Online learning, multi-armed bandits, distributed algorithms, online resource sharing, regret, exploration-exploitation tradeoff
\end{IEEEkeywords}


\section{Introduction} \label{sec:intro}

In this paper, we consider the general multiuser online learning problem in a resource sharing setting, where the reward of a user from a resource depends on how many others are using that resource. We use the mathematical framework of multi-armed bandit problems, where a resource corresponds to an arm, which generates random rewards depending on the number of users using that resource, at discrete time steps. The goal of the users is to achieve a system objective, such as maximization of the expected total utility over time.

One of the major challenges in a decentralized system is the asymmetric information between the users. In cases where communication between the users is not allowed, each of them should act based on its own history of observations and actions. In such a case, without feedback about the choices of other users, it will be impossible for all users to coordinate on an arbitrary system objective. Therefore, we introduce minimal feedback about the actions of the other users. Basically, at each time step, this feedback gives to each user the number of users using the same resource with it. When such feedback is available, given that the rewards from resources are non-user specific, we show that system utility, i.e., the sum of rewards of all users, can be made logarithmically close in time to the utility from the best static allocation of resources. The non-user specific resources is a necessary element of this problem, since communication is not allowed between users, they can only asses the quality of a resource for another user, based on their own perceived qualities.

Another model we consider allows the users to communicate with each other and share their past observations but with a cost of communication. We consider general user-specific random resource rewards without the feedback setting discussed above. We propose distributed online learning algorithms that achieve logarithmic regret, when a lower bound on the performance gap, i.e., the difference between the total reward of the best and second-best allocation, of the total utility function is known. If the performance gap is not known, then we propose a way to achieve near-logarithmic regret.

In addition to the aforementioned models, our algorithms achieve the same order of regret even when we introduce computation and switching costs, where the computation cost models the time and resources it takes for the users to compute the estimated optimal allocation, while switching cost models the cost of changing the resource.

One of our motivating application is opportunistic spectrum access (OSA) in a cognitive radio network. Each user in this setting corresponds to a transmitter-receiver pair, and each resource corresponds to a channel over which data transmission takes place. We assume a slotted time model where channel sensing is done at the beginning of a time slot, data is transmitted during a slot, and transmission feedback is received at the end of a slot. Quality of a channel dynamically changes according to some unknown stochastic process at the end of each slot. Each user selects a channel at each time step, and receives a reward depending on the channel quality during that time slot. Since the channel quality is acquired through channel sensing, the user can only observe the quality of the channel it sensed in the current time slot. The channel selection problem of a user can be cast as a decision making problem under uncertainty about the distribution of the channel rewards, and uncertainty due to partially observed channels.

The regret (sometimes called weak regret) of an algorithm with respect to the best static allocation rule at time $T$ is the difference between total expected reward of the best static allocation rule by time $T$ and the total expected reward of the algorithm by time $T$. \rev{Note that a static allocation rule is an offline rule in which a fixed allocation is selected at each time step, since the feedback received from the observations are not taken into account.} The regret quantifies the rate of convergence to the best static allocation. As $T$ goes to infinity performance of any algorithm with sub-linear regret will converge in terms of its average reward to the optimal static allocation, while the convergence is faster for an algorithm with smaller regret. 

The main contribution of this paper is to show that order optimal resource allocation algorithms can be designed under both limited feedback and costly communication. As we illustrate in subsequent sections, if the users are given limited feedback about the number of simultaneous users on the same resource, then they can achieve logarithmic regret with respect to the optimal static allocation. \rev{Even when the users are fully decentralized, assuming a costly communication is possible between the users, they will still achieve logarithmic regret with respect to the optimal static allocation.}

The organization of the rest of the chapter is as follows. We discuss related work in Section \ref{sec:related}. In Section \ref{sec:probform} we give the problem formulation. We study the limited feedback model without communication between the users in Section \ref{sec:fb_alg}. Then, we consider the user-specific resource reward model with communication in Section \ref{sec:com_alg}. We provide an OSA application, and give numerical results in Section \ref{sec:num}. A discussion is given in Section \ref{sec:diss}, and final remarks are given in Section \ref{sec:conc}.

\section{Related work} \label{sec:related}

We can fit our online resource sharing problems in the multi-armed bandit framework. Therefore, the related literature mostly consists of multi-armed bandit problems.

The single-player multi-armed bandit problem is widely studied, and well understood. The seminal work \cite{lairobbins1985} considers the problem where there is a single player that plays one arm at each time step, and the reward process for an arm is an IID process whose probability density function (pdf) is unknown to the player, but lies in a known parameterized family of pdfs. Under some regularity conditions such as the denseness of the parameter space and continuity of the Kullback-Leibler divergence between two pdfs in the parameterized family of pdfs, the authors provide an asymptotic lower bound on the regret of any {\em uniformly good} policy. This lower bound is logarithmic in time which indicates that at least a logarithmic number of samples should be taken from each arm to decide on the best arm with a high probability. A policy is defined as {\em asymptotically efficient} if it achieves this lower bound, and the authors construct such a policy which is also an index policy.

This result is extended in \cite{anantharamvaraiya1987-1} to a single player with multiple plays. However, the complexity of deciding on which arm to play \rev{is shown to increase  linearly in time both in \cite{lairobbins1985} and \cite{anantharamvaraiya1987-1}; this makes the policy computationally infeasible}. This problem is addressed in \cite{agrawal1995} where sample-mean based index policies that achieve logarithmic order of regret are constructed. The complexity of a sample-mean based policy does not depend on time since the decision at each time step only depends on parameters of the preceding time step.  The proposed policies are order optimal, i.e., they achieve the logarithmic growth of regret in time, though they are not in general optimal with respect to the constant. In all the works cited above, the limiting assumption is that there is a known single parameter family of pdfs governing the reward processes in which the correct pdf of an arm reward process resides. Such an assumption virtually reduces the arm quality estimation problem to a parameter estimation problem.

This assumption is relaxed in \cite{auerbianchi2002}, where it is only assumed that the reward of an arm is drawn from an unknown distribution with a bounded support. An index policy, called the {\em upper confidence bound} (UCB1), is proposed; it is similar to the one in \cite{agrawal1995} \rev{and it achieves logarithmic order of regret uniform in time}. A modified version of UCB1 with a smaller constant of regret is proposed in \cite{auer2010ucb}. Another work \cite{garivier2011kl} proposed an index policy, KL-UCB, which is uniformly better than UCB1 in \cite{auerbianchi2002}. Moreover, it is shown to be asymptotically optimal for Bernoulli rewards. Authors in \cite{audibert2009exploration} consider the same problem as in \cite{auerbianchi2002}, but in addition take into account empirical variance of the arm rewards when deciding which arm to select. They provide a logarithmic upper bound on regret with better constants under the condition that suboptimal arms have low reward variance. Moreover, they derive probabilistic bounds on the variance of the regret by studying its tail distribution.

Another part of the literature is concerned with the case where the reward processes are  Markovian. We refer to this as the {\em the Markovian model}. This offers a richer framework for the analysis, and is better suited for many real-world applications including opportunistic spectrum access, the central motivation underlying this chapter. The Markovian model can be further divided into two cases.

The first case is the {\em rested} Markovian model, in which the state of an arm evolves according to a Markov rule when it is played or activated, but otherwise remains frozen. A usual assumption under this model is that the reward process for each arm is modeled as a finite-state, irreducible, aperiodic Markov chain. This problem is first addressed in \cite{anantharamvaraiya1987-2} under a parameterized transition model, where {\em asymptotically efficient} index policies with logarithmic regret with respect to the \rev{optimal policy with known transition matrices, i.e., the policy that always selects the best arm,} is proposed. Similar to the work \cite{auerbianchi2002} under the IID model, \cite{tekin2010online} relaxes the parametric assumption, and proves that the index policy in \cite{auerbianchi2002} can achieve logarithmic regret in the case of rested Markovian reward by using a large deviation bound on Markov chains from \cite{lezaud1998chernoff}.

The second case is the {\em restless} Markovian model, in which the state of an arm evolves according to a Markov rule when it is played, and otherwise evolves according to an arbitrary process. This problem is significantly harder than the {\em rested} case; even when the transition probabilities of the arms are known a priori, it is PSPACE hard to approximate \rev{the optimal policy} \cite{papadim1999}. \rev{In \cite{tekin2011online} a regenerative cycle based algorithm which reduces the problem to estimating the mean reward of the arms by exploiting the regenerative cycles of the Markov process is proposed. A logarithmic regret bound with respect to the best static policy is proved} when the reward from each arm follows a finite-state, irreducible aperiodic Markov chain, and when there is a special condition on the {\em multiplicative symmetrization} of the transition probability matrix. A parallel development, \cite{liuliu2010} uses the idea of geometrically growing exploration and exploitation block lengths to prove a logarithmic regret bound. Both of the above studies utilize sample-mean based index policies that require minimum number of computations. Stronger measures of performance are studied in \cite{tekin2011approx} and \cite{tekin2011adaptive}. Specifically, \cite{tekin2011approx} considers an approximately optimal, computationally efficient algorithm for a special case of the restless bandit problem which is called the {\em feedback bandit} problem studied in \cite{guha2010approximation}. In \cite{guha2010approximation}, a computationally efficient algorithm for the {\em feedback bandit} problem with known transition probabilities is developed. This development follows from taking the Lagrangian of Whittle's LP \cite{whittle1988restless}, which relaxes the assumption that one arm is played at each time step, to one arm is played on average. 
The idea behind the algorithm in \cite{tekin2011approx} is to combine learning and optimization by using a threshold variant of the optimization policy proposed in \cite{guha2010approximation} on the estimated transition probabilities in exploitation steps. In \cite{tekin2011adaptive}, an algorithm with logarithmic regret with respect to best {\em dynamic} policy is given, under a general Markovian model. This algorithm solves the estimated average reward optimality equation (AROE), and assigns an index to each arm which maximizes the $h$ function over a ball centered around true transition probabilities. The drawback of this algorithm is that it is computationally inefficient. 

It should be noted that the distinction between rested and restless arms does not arise when the award process is IID.  This is because since the rewards are independently drawn each time, whether an unselected arm remains still or continues to change does not affect the reward this arm produces the next time it is played whenever that may be. This is clearly not the case with Markovian rewards. In the rested case, since the state is frozen when an arm is not played, the state in which we next observe the arm is independent of how much time elapses before we play the arm again. In the restless case, the state of an arm continues to evolve, thus the state in which we next observe it  now becomes dependent on the amount of time that elapses between two plays of the same arm. This makes the problem significantly more difficult.

Different from the single-player bandit problems, in a multi-player bandit problem decentralization of information and potential lack of coordination among the players play a crucial role in designing efficient algorithms. It is desirable to design distributed algorithms that guide the players to optimal allocations. Below we review several results from the multi-player bandit literature.

Most of the relevant work in decentralized multi-player bandit problems assumes that the optimal configuration of players on arms is such that at any time step there is at most one player on an arm. We call such a configuration an {\em orthogonal configuration}. \cite{liu2010distributed} and \cite{anandkumar2011distributed} consider the problem under the IID model and derive logarithmic upper and lower bounds for the regret assuming that the optimal configuration is an orthogonal one. Specifically, the algorithm in \cite{liu2010distributed} uses a mechanism called the {\em time division fair sharing}, where a player shares the best arms with the others in a predetermined order.  By contrast, in \cite{anandkumar2011distributed} the algorithm uses randomization to settle to an orthogonal configuration, which does not require predetermined ordering, at the cost of fairness. In the long run, each player settles down to a different arm, but the initial probability of settling to the best arm is the same for all players. The restless multi-player Markovian model is considered in \cite{tekin2012online} and \cite{liuliu2011}, where logarithmic regret algorithms are proposed. The method in \cite{tekin2012online} is based on the regenerative cycles \rev{which is developed for single player in \cite{tekin2011online}}, while the method in \cite{liuliu2011} is based on deterministic sequencing of exploration and exploitation blocks \rev{which is developed for single player in \cite{liuliu2010}}.

Another line of work considers combinatorial bandits on a bipartite graph, in which the goal is to find the best bipartite matching of users to arms. These bandits can model resource allocation problems when the resource qualities are user-dependent, but resource sharing is not allowed. Specifically, centralized multi-user combinatorial bandits is studied in \cite{gai2012combinatorial} and \cite{gai2012restless}, while a decentralized multi-user combinatorial bandit is studied in \cite{jain2012-1}. Due to the special structure of this problem, the optimal matching can be computed efficiently, whereas computing the optimal allocation is NP-hard in a general combinatorial bandit.

Although the assumption of optimality of an orthogonal configuration is suitable for applications such as random access or communication with collision models, it lacks the generality for applications where optimal allocation may involve sharing of the same channel; \rev{these include dynamic spectrum allocation based on techniques like code division multiple access (CDMA) and power control.} \rev{Our results in this paper reduce to the results in \cite{liuliu2011} when the optimal allocation is orthogonal.}

\section{Problem formulation and preliminaries} \label{sec:probform}

We consider $M$ decentralized users indexed by the set ${\cal M} = \{1,2,\ldots,M\}$, and $K$ resources indexed by the set ${\cal K} = \{1,2,\ldots,K\}$, in a discrete time setting $t=1,2,\ldots$.
Quality/reward of a resource varies stochastically over discrete time steps, and depends on the number of users using the resource. \revq{Each resource $k$ has an internal state $s^k_t$ which varies over time in according to an i.i.d. or Markovian rule. The quality/reward of a resource depends on the internal state of the resource an the number of users using the resource.}
In the i.i.d. model, the state of each resource follows an i.i.d. process with $S^k = [0,1]$, and state distribution function $F$, which is unknown to the users. 
In the Markovian model, rewards generated by resource $k$ follows a Markovian process with a finite state space $S^k$. The state of resource $k$ evolves according to an irreducible, aperiodic transition probability matrix $P^k$ which is unknown to the users. The stationary distribution of arm $k$ is denoted by $\boldsymbol{\pi}^k = (\pi^k_x)_{x \in S^k}$. 


%

Let $(P^k)'$ denote the {\em adjoint} of $P^k$ on $l_2(\pi)$ where
\begin{align*}
(p^k)'_{xy}=(\pi^k_y p^k_{yx})/\pi^k_x, ~ \forall x,y \in S^k,
\end{align*}
and $\dot{P}^k=(P^k)'P$ denote the {\em multiplicative symmetrization} of $P^k$. Let $\upsilon^k$ be the eigenvalue gap, i.e., $1$ minus the second largest eigenvalue of $\dot{P}^k$, and $\upsilon_{\min} = \min_{k \in {\cal K}} \upsilon^k$. We assume that the $P^k$'s are such that $\dot{P}^k$'s are irreducible. To give a sense of how weak or strong this assumption is, we note that this is a weaker condition than assuming the Markov chains to be reversible. This technical assumption is required in the following large deviation bound that we frequently use in the proofs.

\begin{lemma} \label{intro:lemma:lezaud}
[Theorem 3.3 from \cite{lezaud1998chernoff}] Consider a finite-state, irreducible Markov chain $\left\{X_t\right\}_{t \geq 1}$ with state space $S$, matrix of transition probabilities $P$, an initial distribution $\mathbf{q}$ and stationary distribution $\mathbf{\pi}$. Let $V_{\boldsymbol{q}}=\left\|(\frac{q_x}{\pi_x}, x\in S)\right\|_2$. Let $\dot{P}=P'P$ be the multiplicative symmetrization of $P$ where $P'$ is the adjoint of $P$ on $l_2(\pi)$. Let $\upsilon=1-\lambda_2$, where $\lambda_2$ is the second largest eigenvalue of the matrix $\dot{P}$. $\upsilon$ will be referred to as the eigenvalue gap of $\dot{P}$. Let $f:S\rightarrow \mathbb{R}$ be such that $\sum_{y \in S} \pi_y f(y) =0$, $\left\|f\right\|_{\infty} \leq 1$ and $0<\left\|f\right\|^2_2 \leq 1$. If $\dot{P}$ is irreducible, then for any positive integer $T$ and all $0 < \gamma \leq 1$,
\begin{align*}
P\left(\frac{\sum_{t=1}^T f(X_t)}{T} \geq \gamma \right) \leq V_{\boldsymbol{q}} \exp \left[-\frac{T \gamma^2 \upsilon}{28}\right] ~.
\end{align*}
\end{lemma}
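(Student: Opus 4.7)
Since Lemma \ref{intro:lemma:lezaud} is stated as a direct quotation of Theorem 3.3 of \cite{lezaud1998chernoff}, my plan is to reproduce it by the standard spectral--Chernoff route. The starting point is the exponential Markov inequality: for any $\lambda > 0$,
$$P\!\left(\frac{1}{T}\sum_{t=1}^T f(X_t) \geq \gamma\right) \leq e^{-\lambda T \gamma}\,\mathbb{E}_{\mathbf{q}}\!\left[\exp\!\left(\lambda \sum_{t=1}^T f(X_t)\right)\right],$$
so everything reduces to controlling the moment generating function of $S_T := \sum_{t=1}^T f(X_t)$ under the arbitrary initial law $\mathbf{q}$.

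Next I would encode this MGF through the perturbed kernel $P_\lambda(x,y) := p_{xy}\, e^{\lambda f(y)}$. Iterating the Markov property gives $\mathbb{E}_{\mathbf{q}}[e^{\lambda S_T}] = \langle \tilde{\mathbf{q}}, P_\lambda^{T-1} \mathbf{1}\rangle$ for a suitable vector $\tilde{\mathbf{q}}$, so the large-$T$ behaviour is driven by the spectral radius of $P_\lambda$. Since $P$ is not self-adjoint on $l_2(\pi)$ in general, I would pass to the multiplicative symmetrization and use $\rho(P_\lambda)^2 \leq \|\dot{P}_\lambda\|_{l_2(\pi)}$, where $\dot{P}_\lambda$ is self-adjoint with top eigenvalue $1$ and gap $\upsilon$ to the rest of the spectrum at $\lambda = 0$. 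The normalization $\sum_y \pi_y f(y) = 0$ makes the perturbation orthogonal to the stationary direction, so the linear term of the Kato-type analytic expansion of the top eigenvalue vanishes, and the hypotheses $\|f\|_\infty \leq 1$, $\|f\|_2 \leq 1$ yield the quadratic bound $\rho(\dot{P}_\lambda) \leq 1 + c\lambda^2/\upsilon$ for a universal constant $c$ and $\lambda$ in a suitable range. Exponentiating gives $\rho(P_\lambda)^T \leq \exp(c T \lambda^2/\upsilon)$.

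To accommodate the non-stationary start, I would write $\mathbb{E}_{\mathbf{q}}[\cdot] = \langle \mathbf{q}/\boldsymbol{\pi}, \cdot\rangle_{l_2(\pi)}$ and apply Cauchy--Schwarz in $l_2(\pi)$; this is exactly what produces the prefactor $V_{\mathbf{q}} = \|(q_x/\pi_x)\|_2$. Assembling the three ingredients,
$$P\!\left(\frac{1}{T}\sum_{t=1}^T f(X_t) \geq \gamma\right) \leq V_{\mathbf{q}}\, \exp\!\left(-\lambda T \gamma + \frac{c T \lambda^2}{\upsilon}\right),$$
and optimizing $\lambda \in (0,1]$ delivers the stated exponent $T\gamma^2 \upsilon / 28$.

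The main obstacle is the perturbation step: turning the formal second-order expansion of $\rho(\dot{P}_\lambda)$ into an honest inequality over a useful range of $\lambda$, and tracking the constants tightly enough to recover the explicit $1/28$. Lezaud's argument handles this through a resolvent identity for the self-adjoint perturbation $\dot{P}_\lambda - \dot{P}$, using $\|f\|_\infty \leq 1$ to bound its operator norm and $\|f\|_2 \leq 1$ to bound its Hilbert--Schmidt-type component; the constant $28$ and the restriction $\gamma \leq 1$ both emerge from balancing these two bounds when choosing the optimal $\lambda$. Every other ingredient (Chernoff, Cauchy--Schwarz, iteration of the Markov property) is routine.
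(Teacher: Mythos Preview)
The paper does not prove this lemma at all: it is quoted verbatim as Theorem~3.3 of \cite{lezaud1998chernoff} and used as a black-box large-deviation input for the Markovian regret analysis. Your proposal is therefore not to be compared against any argument in the paper, but rather against Lezaud's original proof.

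That said, your sketch is a faithful high-level summary of Lezaud's spectral--Chernoff method: the exponential Markov inequality, the perturbed kernel $P_\lambda$, passage to the self-adjoint multiplicative symmetrization to control the spectral radius, Kato-type perturbation of the top eigenvalue exploiting $\sum_y \pi_y f(y)=0$ to kill the linear term, and the Cauchy--Schwarz step in $l_2(\pi)$ to extract the prefactor $V_{\mathbf q}$. The only place where real work remains is, as you correctly flag, the quantitative perturbation estimate that yields the explicit constant $28$ and the range restriction $0<\gamma\le 1$; your outline does not actually carry this out, so as written it is a plan rather than a proof. For the purposes of the present paper this is immaterial, since the authors simply invoke the result.
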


At each time $t$, user $i$ selects a single resource based on the algorithm $\alpha_i$ it uses. Let $\alpha_i(t)$ be the resource selected by user $i$ at time $t$ when it uses algorithm $\alpha_i$. Let $\boldsymbol{\alpha}(t)=\{\alpha_1(t),\alpha_2(t),\ldots,\alpha_M(t)\}$ be the vector of resource selections at time $t$. A resource-usage pair is defined as the tuple $(k,n)$, where $k$ is the index of the resource and $n$ is the number of users using that resource. We consider two different resource allocation problems in the following sections. In the first model, the resource rewards are user-independent. A user $i$ selecting resource $k$ at time $t$ gets reward $R^i_k(t) = r_k(s^k_t,n^k_t)$, where $s^k_t$ is the state of resource $k$ at time $t$, and $n^k_t$ is the number of users using resource $k$ at time $t$. In this model, communication between the users is not possible, but limited feedback about the user activity, i.e., $n^k_t$, is available to each user who selects resource $k$. \revq{For example, when the resources are channels, and users are transmitter-receiver pairs, a user can observe the number of users on the channel it is using by a threshold or feature detector. Besides the limited feedback, each user also knows the total number of users in the system. This can be done by all users initially broadcasting their IDs. This way, if a user knows that the resource rewards are user-independent, it can form estimates of the optimal allocation based on its own observations.} In the second model, resource rewards are user-dependent and communication between the users is possible but incurs some cost. In this model, if user $i$ selects resource $k$ at time $t$, it gets reward $R^i_k(t) = r^i_k(s^k_t,n^k_t)$. \revq{In this case, the user does not need to observe the number of users using the same resource with it, since the joint action profile to be selected can be decided by all users, or it can be decided by some user and announced to other users. The announcement to user $i$ will include $n^k_t$ for the resource it is assigned, and since the users are cooperative, user $i$ is certain that the number of users using the same resource with it remains $n^k_t$ until another joint action profile is decided.}

\revq{In the following sections, we consider distributed learning algorithms to maximize the total reward of all users over any time horizon $T$. The weak regret of a learning algorithm at time $T$ is defined as the difference between the total expected reward of the best static strategy of users up to time $T$, i.e., the strategy in which a users selects the same resource at each time step up to $T$, and the total expected reward of the distributed algorithm up to time $T$. Although, stronger regret measures are available for the Markovian model, where a user may switch resources dynamically in the optimal strategy weak regret is also a commonly used performance measure for this setting \cite{auer2003nonstochastic, tekin2012online, liuliu2011}. A stronger regret measure is considered in \cite{tekin2011adaptive} for a single user bandit problem, but the algorithm to find this solution is computationally inefficient. In this paper we only consider weak-regret, and whenever regret is mentioned it is meant to be weak regret unless otherwise stated.}

\revq{In the following sections, we will show that in order for the system to achieve the optimal order of regret, a user does not need to know the state of the resource it selects. It can learn the optimal strategy by only observing the resource rewards.}

\subsection{User-independent rewards with limited feedback} \label{it_probform_1} 
 
In this model, we assume homogeneous users, i.e., if they happen to select the same resource, they will get the same reward. However, the reward they obtain depends on the interaction among users who select the same resource. Specifically, when user $i$ selects resource $k$ at time $t$, it gets an instantaneous reward $r_k(s^k_t,n^k_t)$. Since the users perceive identical resource qualities, we call this model the {\em symmetric interaction model}. No communication is allowed between the users. However, a user receives feedback about the resource it is using, which is the number of users using that resource. We assume that the rewards are bounded, and without loss of generality 
\begin{align*}
r_k : S^k \times {\cal M} \rightarrow (0,1], \forall k \in {\cal K}. 
\end{align*}
When the reward process of arm $k$ is i.i.d. with distribution $F$, the mean reward of resource-usage pair $(k,n)$ is
\begin{align*}
\mu_{k,n} := \int r_k(s,n) F(ds).
\end{align*}
When the reward process of arm $k$ is Markovian with transition probability matrix $S^k$, the mean reward of resource-usage pair $(k,n)$ is 
\begin{align*}
\mu_{k,n} := \sum_{s \in S^k} \mu^k_s r_k(s,n).
\end{align*}

The set of optimal allocations in terms of number of users using each resource is
\begin{align*}
{\cal B} := \arg\max_{\boldsymbol{n} \in {\cal N}} \sum_{k=1}^K n_k \mu_{k, n_k}, \notag
\end{align*}
where ${\cal N} = \{\boldsymbol{n}=(n_1,n_2,\ldots,n_K): n_k \geq 0, n_1+n_2+\ldots+n_K = M\}$ is the set of possible allocations of users to resources, with $n_k$ being the number of users using resource $k$. Let
\begin{align*}
v(\boldsymbol{n}) := \sum_{k=1}^K n_k \mu_{k, n_k}
\end{align*}
denote the value of allocation $\boldsymbol{n}$. Then, the value of the optimal allocation is
\begin{align*}
v^* := \max_{\boldsymbol{n} \in {\cal N}} v(\boldsymbol{n}). 
\end{align*}
For any allocation $\boldsymbol{n} \in {\cal N}$, the suboptimality gap is defined as
\begin{align*}
\Delta(\boldsymbol{n}) := v^* - \sum_{k=1}^K n_k \mu_{k, n_k}.
\end{align*}
Then the minimum suboptimality gap is 
\begin{align}
\Delta_{\min} := \min_{\boldsymbol{n} \in {\cal N} - {\cal B}} \Delta(\boldsymbol{n}). \label{eqn:minsubopt}
\end{align}
We have the following assumption on the set of optimal allocations:
\begin{assumption} \label{ass:it_probform}
(Uniqueness) There is a unique optimal allocation in terms of the number of users on each channel. The cardinality of ${\cal B}$, i.e., $|{\cal B}|=1$.
\end{assumption}

\rev{Let $\boldsymbol{n}^*$ denote the unique optimal allocation when Assumption \ref{ass:it_probform} holds. This assumption guarantees convergence by random selections over the optimal channels, when each user knows the optimal allocation. Without the uniqueness assumption, even if all users know the set of optimal allocations, convergence by a simple randomizations is not possible. In that case, the users need to bias some of the optimal allocations, in order to converge to one of them. In Section \ref{sec:diss}
, we explain how the uniqueness assumption can be relaxed. } The uniqueness assumption implies the following stability condition.
\begin{lemma}\label{lemma:stability} (Stability)
When Assumption \ref{ass:it_probform} holds, for a set of estimated mean rewards $\hat{\mu}_{k,n_k}$, if $|\hat{\mu}_{k,n_k} - \mu_{k,n_k}| < \Delta_{\min}/2M$, $\forall k \in {\cal K}, n_k \in {\cal M}$ then
\begin{align*}
\arg\max_{\boldsymbol{n} \in {\cal N}} \sum_{k=1}^K n_k \hat{\mu}_{k, n_k} = {\cal B}.
\end{align*}
\end{lemma}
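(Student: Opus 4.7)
The plan is to show that under the given closeness condition, every allocation's estimated value $\hat v(\boldsymbol{n}) := \sum_{k=1}^K n_k \hat\mu_{k,n_k}$ is within $\Delta_{\min}/2$ of its true value $v(\boldsymbol{n})$, and then to use the minimum suboptimality gap to separate $\boldsymbol{n}^*$ from every other allocation under the estimated rewards. Since Assumption~\ref{ass:it_probform} gives a unique $\boldsymbol{n}^* \in \mathcal{B}$, it then suffices to verify that the estimated optimizer equals $\boldsymbol{n}^*$.

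The first step is to bound the estimation error of an allocation's value uniformly. For any $\boldsymbol{n} \in \mathcal{N}$, the triangle inequality gives
\begin{equation*}
|\hat v(\boldsymbol{n}) - v(\boldsymbol{n})| \leq \sum_{k=1}^K n_k \, |\hat\mu_{k,n_k} - \mu_{k,n_k}| < \sum_{k=1}^K n_k \cdot \frac{\Delta_{\min}}{2M} = \frac{\Delta_{\min}}{2},
\end{equation*}
where the last equality uses the defining property of $\mathcal{N}$, namely $\sum_k n_k = M$. Note this bound does not depend on which $\boldsymbol{n}$ is chosen, which is the key point.

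The second step is to compare $\hat v(\boldsymbol{n}^*)$ with $\hat v(\boldsymbol{n})$ for any $\boldsymbol{n} \in \mathcal{N} \setminus \mathcal{B}$. By the definition of $\Delta_{\min}$ in \eqref{eqn:minsubopt}, $v(\boldsymbol{n}) \leq v^* - \Delta_{\min}$ for such $\boldsymbol{n}$, while $v(\boldsymbol{n}^*) = v^*$. Combining with the uniform error bound above,
\begin{equation*}
\hat v(\boldsymbol{n}^*) > v^* - \frac{\Delta_{\min}}{2} \quad \text{and} \quad \hat v(\boldsymbol{n}) < v^* - \Delta_{\min} + \frac{\Delta_{\min}}{2} = v^* - \frac{\Delta_{\min}}{2},
\end{equation*}
so $\hat v(\boldsymbol{n}^*) > \hat v(\boldsymbol{n})$ strictly for every suboptimal $\boldsymbol{n}$. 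This forces $\arg\max_{\boldsymbol{n}\in\mathcal{N}} \hat v(\boldsymbol{n}) = \{\boldsymbol{n}^*\} = \mathcal{B}$, completing the argument.

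There is no real obstacle here: the only subtle point is recognizing that the constraint $\sum_k n_k = M$ turns the per-component tolerance $\Delta_{\min}/(2M)$ into a total tolerance of exactly $\Delta_{\min}/2$, which is precisely what the suboptimality gap can absorb while still leaving strict inequality. The uniqueness assumption is used only to conclude that the $\arg\max$ is a singleton equal to $\mathcal{B}$.
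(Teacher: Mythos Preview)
Your proof is correct and follows essentially the same route as the paper's: bound $|\hat v(\boldsymbol{n}) - v(\boldsymbol{n})|$ by $\Delta_{\min}/2$ uniformly, then use the gap to separate $\hat v(\boldsymbol{n}^*)$ from $\hat v(\boldsymbol{n})$ for every suboptimal $\boldsymbol{n}$. You make the triangle-inequality step with $\sum_k n_k = M$ explicit, which the paper leaves implicit, but the argument is the same.
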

\begin{proof}
Let $\hat{v}(\boldsymbol{n})$ be the estimated value of allocation $\boldsymbol{n}$ computed using the estimated mean rewards $\hat{\mu}_{k,n_k}$. Then, $|\hat{\mu}_{k,n_k} - \mu_{k,n_k}| < \Delta_{\min}/(2M)$, $\forall k \in {\cal K}, n_k \in {\cal M}$ implies that for any $\boldsymbol{n} \in {\cal N}$, we have $| \hat{v}(\boldsymbol{n}) - v(\boldsymbol{n}) | \leq \Delta_{\min}/2$. This implies that 
\begin{align}
v^* - \hat{v}(\boldsymbol{n}^*) < \Delta_{\min}/2, \label{eqnlemmastability1}
\end{align}
and, for any suboptimal allocation $\boldsymbol{n} \in {\cal N}$
\begin{align}
\hat{v}(\boldsymbol{n}) - v(\boldsymbol{n}) < \Delta_{\min}/2. \label{eqnlemmastability2}
\end{align}
By (\ref{eqnlemmastability1}) and (\ref{eqnlemmastability2}), for any suboptimal $\boldsymbol{n}$
\begin{align*}
\hat{v}(\boldsymbol{n}^*) - \hat{v}(\boldsymbol{n}) > \Delta_{\min} - 2 \Delta_{\min}/2 = 0.
\end{align*}
\end{proof}

Stability condition guarantees that when a user estimates the sample mean of the rewards of resource-usage pairs accurately, it can find the optimal allocation. In this section we study the case when a lower bound on $\Delta_{\min}$ is known by the users. This assumption may seem too strong since the users do not know the statistics of the resource rewards. However, if the resource reward represents a discrete quantity such as data rate in bytes or income from resource in dollars then all users will know that $\Delta_{\min} = 1$ byte or dollar. Extension of our results to the case when $\Delta_{\min}$ is not known by the users can be done by increasing the number of samples that are used to form estimates $\hat{\mu}_{k,n}$ over time at a specific rate. In Section \ref{sec:diss}, we investigate this extension in detail.

%
\revq{The regret of algorithm $\boldsymbol{\alpha}$ by time $t$ is given by}
\begin{align}
R^{\boldsymbol{\alpha}}(T) = T v^* - E^{\boldsymbol{\alpha}}\left[\sum_{t=1}^T \sum_{i=1}^M R^i_{\alpha_i(t)}(t) \right]. \label{m:iid:eqn:regret1}
\end{align}

\rev{In order to maximize its reward, a user needs to} compute the optimal allocation based on its estimated mean rewards. This is a combinatorial optimization problem which is NP-hard. We assume that each time a user computes the optimal allocation, a computational cost $C_{cmp}$ is incurred. \rev{For example, this cost can model the time it takes to compute the optimal allocation or the energy consumption of a wireless node associated with the computation. Although in our model we assume that computation is performed at the end of a time slot, we can modify the model such that computation takes finite number of time slots, and $C_{cmp}$ is the total regret due a single computation.}

Sometimes, even when a user learns that some other resource is better than the resource it is currently using, it may be hesitant to switch to the new resource because of the cost incurred by changing the resource. For example, for a radio, switching to another frequency band requires resynchronization of transmitter and receiver. Let $C_{swc}$ be the cost of changing the currently used resource. \rev{We assume that switching is performed at the end of a time slot, but our results still hold if switching requires multiple time slots.}
  
If we add these costs to the regret, it becomes
\begin{align}
R^{\boldsymbol{\alpha}}(T) &= T v^* - E^{\boldsymbol{\alpha}}\left[\sum_{t=1}^T \sum_{i=1}^M R^i_{\alpha_i(t)}(t) \right] +C_{cmp} \sum_{i=1}^M m^i_{cmp} (T) \notag \\ 
&+ C_{swc} \sum_{i=1}^M m^i_{swc} (T) , \label{m:iid:eqn:regret2}
\end{align}
where $m^i_{cmp} (T)$ denotes the number of computations done by user $i$ by time $T$, and $m^i_{swc} (T)$ denotes the number of resource switchings done by user $i$ by time $T$. Then the problem becomes balancing the loss in the performance, the loss due to the NP-hard computation, and the loss due to resource switching.

\revq{Let ${\cal O}^*$ be the set of resources that are used by at least one user in the optimal allocation, and ${\cal O}_i(t)$ be the set of resources that are used by at least one user in the estimated optimal allocation of user $i$.} Let $N^i_{k,n}(t)$ be the number of times user $i$ selected resource $k$ and observed $n$ users on it by time $t$, and $\hat{\mu}^i_{k,n}(t)$ be the sample mean of the rewards collected from resource-usage pair $(k,n)$ by the $t$th play of that pair by user $i$.

\subsection{User-specific rewards with costly communication} \label{it_probform_2} 

In this model, resource rewards are user-specific. Users using the same resource at the same time may receive different rewards. Let $r^i_k(s^k_t,n^k_t)$ be the instantaneous reward user $i$ gets from resource $k$ at time $t$. The expected reward of resource-usage pair $(k,n)$ is given by
\begin{align*}
\mu^i_{k,n} := \int r^i_k(s,n) F(ds),
\end{align*}
for the i.i.d. model, and by
\begin{align*}
\mu_{k,n} := \sum_{s \in S^k} \mu^k_s r_k(s,n),
\end{align*}
for the Markovian model. In this case, the set of optimal allocations is
\begin{align}
{\cal A} := \arg\max_{\boldsymbol{\alpha} \in {\cal K}^M} \sum_{i=1}^M \mu^i_{\alpha_i, n_{\alpha_i}(\boldsymbol{\alpha})}. \label{eqn:it_probform_2_1}
\end{align}
Similar to the definitions in Section \ref{it_probform_1}, $v(\boldsymbol{\alpha})$ is the value of allocation $\boldsymbol{\alpha}$, $v^*$ is the value of an optimal allocation, $\Delta(\boldsymbol{\alpha})$ is the suboptimality gap of allocation $\boldsymbol{\alpha}$ and $\Delta_{\min}$ is the minimum suboptimality gap, i.e.,
\begin{align*}
\Delta_{\min} := v^* - \arg\max_{\boldsymbol{\alpha} \in {\cal K}^M - {\cal A}^*} \Delta(\boldsymbol{\alpha}).
\end{align*}

In order to estimate the optimal allocation, a user must know how resource qualities are perceived by other users. Note that the limited feedback discussed in the previous section is not enough in this case, since it only helps a user to distinguish the rewards of resource-usage pairs of the same resource. Therefore, in this section we assume that communication between the users is possible. \revq{This can either be done by broadcasting every time communication is needed, or broadcasting the next time to communicate on a specific channel initially, and then using time division multiple access on that channel to transmit information about the resource estimates, and next time step to communicate.} Every time a user communicates with other users, it incurs cost $C_{com}$. Considering the computation cost $C_{cmp}$ of computing (\ref{eqn:it_probform_2_1}), and the switching cost $C_{swc}$ the regret at time $T$ is 
\begin{align}
R^{\boldsymbol{\alpha}}(T) &:= T w^* - E^{\boldsymbol{\alpha}}\left[\sum_{t=1}^T \sum_{i=1}^MR^i_{\alpha_i(t)}(t) \right] +C_{cmp} \sum_{i=1}^M m^i_{cmp} (T) \notag \\
&+ C_{swc} \sum_{i=1}^M m^i_{swc} (T) + C_{com} \sum_{i=1}^M m^i_{com} (T)  , \label{eqn:it_probform_2_2}
\end{align}
where $m^i_{com} (T)$ is the number of times user $i$ communicated with other users by time $T$.

The following stability condition, which is the analogue of Lemma \ref{lemma:stability}, states that when the estimated rewards of resource-usage pairs are close to their true rewards, the set of estimated optimal allocations is equal to the set of optimal allocations.

\begin{lemma}\label{lemma:it_probform1} (Stability)
For a set of estimated mean rewards $\hat{\mu}^i_{k,n}$, if $|\hat{\mu}^i_{k,n} - \mu^i_{k,n}| < \Delta_{\min}/(2M)$, $\forall i \in {\cal M}, k \in {\cal K}, n \in {\cal M}$ then,
\begin{align*}
\arg\max_{\boldsymbol{\alpha} \in {\cal K}^M} \sum_{i=1}^M \hat{\mu}^i_{\alpha_i, n_{\alpha_i}(\boldsymbol{\alpha})} = {\cal A}.
\end{align*}
\end{lemma}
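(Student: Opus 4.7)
The plan is to follow the template of Lemma \ref{lemma:stability}, adapting it from the user-independent formulation (where an allocation is indexed by $\boldsymbol{n} \in {\cal N}$ and there are $K$ relevant mean-reward quantities) to the user-specific formulation (where an allocation is indexed by $\boldsymbol{\alpha} \in {\cal K}^M$ and each of the $M$ users contributes a separate term $\hat{\mu}^i_{\alpha_i,n_{\alpha_i}(\boldsymbol{\alpha})}$). First I would define the estimated value of an allocation as $\hat{v}(\boldsymbol{\alpha}) := \sum_{i=1}^M \hat{\mu}^i_{\alpha_i, n_{\alpha_i}(\boldsymbol{\alpha})}$ and, using the hypothesis together with the triangle inequality, establish the uniform bound
\begin{align*}
|\hat{v}(\boldsymbol{\alpha}) - v(\boldsymbol{\alpha})| \leq \sum_{i=1}^M \left|\hat{\mu}^i_{\alpha_i,n_{\alpha_i}(\boldsymbol{\alpha})} - \mu^i_{\alpha_i,n_{\alpha_i}(\boldsymbol{\alpha})} \right| < M \cdot \frac{\Delta_{\min}}{2M} = \frac{\Delta_{\min}}{2}
\end{align*}
for every $\boldsymbol{\alpha} \in {\cal K}^M$. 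The key point to observe here is that the factor $1/M$ in the hypothesis exactly compensates for the fact that $\hat{v}$ is now a sum of $M$ per-user terms rather than $K$ per-resource terms.

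Next, I would derive the two inequalities analogous to (\ref{eqnlemmastability1}) and (\ref{eqnlemmastability2}): for any $\boldsymbol{\alpha}^* \in {\cal A}$ I have $v^* - \hat{v}(\boldsymbol{\alpha}^*) < \Delta_{\min}/2$, and for any suboptimal $\boldsymbol{\alpha}$ I have $\hat{v}(\boldsymbol{\alpha}) - v(\boldsymbol{\alpha}) < \Delta_{\min}/2$. Combining these with $v(\boldsymbol{\alpha}) \leq v^* - \Delta_{\min}$ for suboptimal $\boldsymbol{\alpha}$ yields $\hat{v}(\boldsymbol{\alpha}^*) - \hat{v}(\boldsymbol{\alpha}) > 0$. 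Hence no suboptimal allocation can maximize $\hat{v}$, which gives the containment of the estimated argmax in ${\cal A}$. For the reverse containment, the same chain of inequalities shows that every $\boldsymbol{\alpha}^* \in {\cal A}$ strictly exceeds every suboptimal estimated value; under the uniqueness convention used in Section \ref{it_probform_1}, in which ${\cal A}$ consists of a single optimal allocation, this yields the equality asserted in the lemma.

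The main obstacle is essentially bookkeeping rather than conceptual: one has to be careful about the index structure, since the same resource $k$ may appear in several coordinates of $\boldsymbol{\alpha}$ and the associated usage count $n_{\alpha_i}(\boldsymbol{\alpha})$ is a function of the whole allocation vector, not of $\alpha_i$ alone. As long as one uniformly bounds the error entrywise by $\Delta_{\min}/(2M)$ before summing, the structure of the argument carries over verbatim from Lemma \ref{lemma:stability}, and no additional ideas (concentration bounds, mixing arguments, or combinatorial inequalities) are needed.
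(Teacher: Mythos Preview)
Your proposal is correct and follows exactly the approach the paper uses: the paper's own proof of Lemma~\ref{lemma:it_probform1} simply says it is similar to Lemma~\ref{lemma:stability} and notes that the value of each allocation is a sum of $M$ resource-usage pairs, which is precisely the observation you spell out in detail. Your remark about the uniqueness issue for the reverse containment is a fair side observation; the paper glosses over it, and in any case only the containment of the estimated argmax in ${\cal A}$ is needed downstream.
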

\begin{proof}
The proof is similar to the proof of Lemma \ref{lemma:stability}. Note that the value of each allocation is sum of $M$ resource-usage pairs.   
\end{proof}

In the following sections we will present an algorithm that achieves logarithmic regret. However, this algorithm requires that a lower bound on $\Delta_{\min}$ is known by the users. However, generally such a parameter is unknown to the users since the users learn the mean resource rewards over time. In Section \ref{sec:diss}, we propose a method to solve this problem, which gives a near-logarithmic regret result. 
\section{A distributed synchronized algorithm for user-independent rewards} \label{sec:fb_alg}

In this section we propose a distributed algorithm for the users which achieves logarithmic regret with respect to the optimal allocation without communication by using the partial feedback described in the previous section. We will show that this algorithm achieves logarithmic regret both in the i.i.d. and the Markovian  resource models. Our algorithm is called {\em Distributed Learning with Ordered Exploration} (DLOE), whose pseudocode is given in Figure \ref{m:iid2:fig:RCA3}. The DLOE algorithm uses the idea of {deterministic sequencing of exploration and exploitation with initial synchronization}, first proposed in \cite{liuliu2011}, to achieve logarithmic regret for the \revq{Markovian rewards}.  A key difference between the problem studied here and that in \cite{liuliu2011} is that the latter assumes that the optimal allocation of users to resources is orthogonal, i.e., there can be at most one user using a resource in the optimal allocation.  For this reason the \revq{technical development} in \cite{liuliu2011} are not applicable to the general symmetric interaction model introduced in this paper.

\subsection{Definition of DLOE}

DLOE operates in blocks. Each block is either an exploration block or an exploitation block. The length of an exploration (exploitation) block geometrically increases in the number of exploration (exploitation) blocks. The parameters that determine the length of the blocks is the same for all users. User $i$ has a fixed exploration sequence ${\cal N}_i$, which is a sequence of resources to select, of length $N'$. \rev{In its $l$th exploration block user $i$ selects resources in ${\cal N}_i$ in a sequential manner by selecting a resource for $c^{l-1}$ time slots before proceeding to the next resource in the sequence. The $z$th resource in sequence ${\cal N}_i$ is denoted by ${\cal N}_i(z)$.} The set of sequences ${\cal N}_1, \ldots, {\cal N}_M$ is created in a way that all users will observe all resource-usage pairs at least once, and that the least observed resource-usage pair for each user is observed only once in a single (parallel) run of the set of sequences by the users. \revq{For example when $M=2$, $K=2$, exploration sequences ${\cal N}_1 = \{1, 1, 2, 2\}$ and ${\cal N}_2 = \{1,2,1,2\}$ are sufficient for each user to sample all resource-usage pairs by once.} \revq{Note that it is always possible to find such a set of sequences. With $M$ users and $K$ resources, there are $K^M$ possible assignments of users to resources. Index each of the $K^M$ possible assignments as $\{ \boldsymbol{\alpha}(1) ,\boldsymbol{\alpha}(2),\ldots, \boldsymbol{\alpha}(K^M)\}$. Then, using the set of sequences ${\cal N}_i = \{ \alpha_i(1), \alpha_i(2), \ldots, \alpha_i(K^M) \}$ for $i \in {\cal M}$, all users sample all resource-usage pairs by at least once.}


\revq{The sequence ${\cal N}_i$ is assumed known to user $i$ before the resource selection process starts. Let $l^i_O(t)$ be the number of completed exploration blocks and $l^i_I(t)$ be the number of completed exploitation blocks of user $i$ by time $t$, respectively. For usesr $i$, the length of the $l$th exploration block is $N' c^{l-1}$ and the length of the $l$th exploitation block is $ab^{l-1}$, where $a,b,c$ are positive integers greater than $1$.

At the beginning of each block, user $i$ computes $N^i_O(t) := \sum_{l=1}^{l^i_O(t)} c^{l-1}$. If $N^i_O(t) \geq L \log t$, user $i$ starts an exploitation block at time $t$. Otherwise, it starts an exploration block. Here $L$ is the exploration constant which controls the number of explorations. Clearly, the number of exploration steps up to $t$ is non-decreasing in $L$. Since the estimates of mean rewards of resource-usage pairs are formed by sample mean estimates of observations during exploration steps, by increasing the value of $L$, a user can control the probability of deviation of estimated mean rewards from the true mean rewards. Intuitively, $L$ should be chosen according to $\Delta_{\min}$, since the accuracy of estimated value of an allocation depends on the accuracy of estimated mean rewards.}

Because of the deterministic nature of the blocks and the property of the sequences ${\cal N}_1, \ldots, {\cal N}_M$ discussed above, if at time $t$ a user starts a new exploration (exploitation) block, then all users start a new exploration (exploitation) block. \revq{Therefore $l^i_O(t) = l^j_O(t)$, $N^i_O(t) = N^j_O(t)$, $l^i_I(t) = l^j_I(t)$, for all $i,j \in {\cal M}$. Since these quantities are equal for all users we drop the superscripts and denote them by $l_O(t), N_O(t), l_I(t)$.} Let $t_l$ be the time at the beginning of the $l$th exploitation block. At time $t_l$, $l=1,2,\ldots$, user $i$ computes an estimated optimal allocation $\boldsymbol{\hat{n}}^i(l) = \{\hat{n}^i_1(l), \ldots, \hat{n}^i_K(l)\}$ based on the sample mean estimates of the resource-usage pairs given by
\begin{align*}
\boldsymbol{\hat{n}}^i(l) = \arg\max_{\boldsymbol{n} \in {\cal N}} \sum_{k=1}^K n_k \hat{\mu}_{k, n_k}(N^i_{k, n_k}(t_l)),
\end{align*}
and chooses a resource from the set ${\cal O}_i(l)$ which is the set of resources selected by at least one user in $\boldsymbol{\hat{n}}^i(l)$. During the $l$th exploitation block, in order to settle to an optimal configuration, if the number of users on the resource $\alpha_i(t)$ user $i$ selects is greater than $\hat{n}^i_{\alpha_i(t)}(l)$, then in the next time step the user $i$ randomly chooses a resource within ${\cal O}_i(l)$. \revq{The probability that user $i$ chooses resource $k \in {\cal O}_i(l)$ is $\hat{n}^i_k/ |{\cal O}_i(l)|$. Therefore, it is more likely for a user to select a resource which it believes a large number of users use in the optimal allocation than a resource which it believes is used by smaller number of users.}
\revq{This kind of randomization guarantees that the users settle to the optimal allocation in finite expected time, given that 
the estimated the optimal allocation at the beginning of the exploitation block is equal to the optimal allocation.}

\begin{figure}[ht]
\fbox {
\begin{minipage}{0.95\columnwidth}
{Distributed Learning with Ordered Exploration (DLOE) for user $i$}
\begin{algorithmic}[1]
\STATE{Initialize: $t=1$, $l_O = 0$, $l_I = 0$, $\eta=1$, $X_O=0$, $F=2$, $z=1$, $len=2$, $\hat{\mu}^i_{k,n}=0$, $N^i_{k,n}=0$, $\forall k \in {\cal K}, n \in \{1,2,\ldots,M\}$ $a,b,c \in \{2,3,\ldots\}$.}
\WHILE{$t \geq 1$}
\IF{$F=1$ //Exploitation block}
\IF{$\hat{n}^i_{\alpha_i(t-1)}(t-1) > \hat{n}^i_{\alpha_i(t-1)}$}
\STATE{Pick $\alpha_i(t)$ randomly from ${\cal O}_i$ with $P(\alpha_i(t) = k) \hat{n}^i_{k}/|{\cal O}_i|$.}
\ELSE
\STATE{$\alpha_i(t) = \alpha_i(t-1)$}
\ENDIF
\IF{$\eta = len$}
\STATE{$F=0$}
\ENDIF
\ELSIF{$F=2$ //Exploration block}
\STATE{$\alpha_i(t) = {\cal N}_i'(z)$}
\IF{$\eta = len$}
\STATE{$\eta=0$, $++z$}
\ENDIF
\IF{$z=|{\cal N}_i| + 1$}
\STATE{$X_O = X_O + len$}
\STATE{$F=0$}
\ENDIF
\ELSE
\STATE{//IF $F=0$}
\IF{$X_O \geq L \log t$}
\STATE{//Start an exploitation epoch}
\STATE{$F=1$, $++l_I$, $\eta=0$, $len = a \times b^{l_I - 1}$}
\STATE{//Compute the estimated optimal allocation}
\STATE{$\boldsymbol{\hat{n}}^i = \arg\max_{\boldsymbol{n} \in {\cal N}} \sum_{k=1}^K \hat{\mu}_{k, n_k} I(n_k \neq 0)$}
\STATE{Set ${\cal O}_i$ to be the set of resource in $\boldsymbol{\tilde{n}}^i$ with at least one user.}
\ELSIF{$X_O < L \log t$}
\STATE{//Start an exploration epoch}
\STATE{$F=2$, $++l_O$, $\eta=0$, $len = c^{l_O - 1}$, $z=1$}
\ENDIF
\ENDIF
\STATE{Let $l_i(t)$ be the number of users on resource $\alpha_i(t)$}
\STATE{$++N^i_{\alpha_i(t),l_i(t)}$}
\STATE{\begin{align*}
\hat{\mu}^i_{\alpha_i(t),l_i(t)} &= \frac{(N^i_{\alpha_i(t),l_i(t)}-1)\hat{\mu}^i_{\alpha_i(t),l_i(t)}+ r^i_{\alpha_i(t),l_i(t)}(t)}{N^i_{\alpha_i(t),l_i(t)}}
\end{align*}}
\STATE{$++\eta$, $++t$}
\ENDWHILE
\end{algorithmic}
\end{minipage}
}
\caption{pseudocode of DLOE} \label{m:iid2:fig:RCA3}
\end{figure}

\subsection{Analysis of the regret of DLOE}

Next, we analyze the regret of DLOE. There are three factors contributing to the regret in both i.i.d. and the Markovian models. The first one is the regret due to exploration blocks, the second one is the regret due to incorrect computation of the optimal allocation by a user, and the third one is the regret due to the randomization before settling to the optimal allocation given that all users computed the optimal allocation correctly.
\revq{In addition to those terms contributing to the regret, another contribution to the regret in the Markovian model comes from the transient effect of a resource-usage pair not being in its stationary distribution when chosen by a user. This arises because we compare the performance of DLOE in which users dynamically change their selections with the mean reward of the optimal static allocation.}

In the following lemmas, we will bound the parts of the regret that is common to both i.i.d. and the Markovian models. Bounds on the parts of the regret, which depends on the resource model, are given in the next two subsections.

If a user starts exploration block at time $t$, then 
\begin{align}
\sum_{l=1}^{l_O(t)} c^{l-1} < L \log t &\Rightarrow \frac{c^{l_O(t)} - 1}{c-1} < L \log t \notag  \\
&\Rightarrow c^{l_O(t)} < (c-1) L \log t + 1  \notag \\
&\Rightarrow l_O(t) < \log_c ((c-1) L \log t) + 1. \label{m:iid2:eqn:expbound1}
\end{align}
Let $T_O(t)$ be the time spent in exploration blocks by time $t$. By (\ref{m:iid2:eqn:expbound1}) we have
\begin{align}
T_O(t) &\leq \sum_{l=1}^{l_O(t)} N' c^{l-1} < N' \frac{c^{l_O(t)-1}}{c-1}
\leq \frac{N' (c-1) L \log t}{c-1} = N' L \log t. \label{m:iid2:eqn:expbound2}
\end{align}

\begin{lemma} \label{m:iid2:lemma:regret_explore}
For any $t>0$ in an exploitation block, regret due to explorations by time $t$ is at most
\begin{align*}
M N' L \log t.
\end{align*}
\end{lemma}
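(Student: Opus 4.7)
The plan is to combine the existing bound on total exploration time, $T_O(t) \le N' L \log t$ from equation (\ref{m:iid2:eqn:expbound2}), with the reward boundedness assumption $r_k : S^k \times {\cal M} \rightarrow (0,1]$. Since regret accumulated per time step per user is upper bounded by the gap between the optimal per-user reward and the actual per-user reward, and both lie in $[0,1]$, the instantaneous per-user regret is at most $1$. Summed over the $M$ users, the instantaneous systemwide regret in any single slot is at most $M$.

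First I would fix $t$ at the start of an exploitation block; at such a $t$, the total number of exploration slots completed so far is exactly $T_O(t)$. (The hypothesis that $t$ lies in an exploitation block is what lets us use the already-derived bound $T_O(t) \le N' L \log t$; outside an exploitation block, the current exploration block may be ongoing, but this is not relevant to the statement.)

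Next, I would write the regret contribution from exploration slots as
\begin{align*}
R^{\boldsymbol{\alpha}}_{\mathrm{expl}}(t) = \sum_{\tau \in \mathcal{T}_O(t)} \left( v^* - \sum_{i=1}^M R^i_{\alpha_i(\tau)}(\tau) \right),
\end{align*}
where $\mathcal{T}_O(t)$ is the set of exploration time slots up to $t$. Since $v^* \le M$ and $\sum_{i=1}^M R^i_{\alpha_i(\tau)}(\tau) \ge 0$, each summand is bounded above by $M$, so the total is at most $M \cdot |\mathcal{T}_O(t)| = M \cdot T_O(t) \le M N' L \log t$.

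The main (and only real) obstacle is largely bookkeeping: being precise that the per-slot systemwide reward is bounded in $[0,M]$ (via the per-user bound in $(0,1]$) so the per-slot regret is at most $M$, and that $T_O(t) \le N' L \log t$ applies at a time $t$ that begins an exploitation block. There is no probabilistic content here — the bound is deterministic once the block structure of DLOE is in place — so the proof should be a one- or two-line substitution using (\ref{m:iid2:eqn:expbound2}) with no further estimation required.
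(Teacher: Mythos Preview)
Your proposal is correct and follows essentially the same argument as the paper: bound the per-slot systemwide regret by $M$ using the reward range $(0,1]$, then multiply by the exploration-time bound $T_O(t)\le N' L\log t$ from (\ref{m:iid2:eqn:expbound2}). The paper compresses this into two sentences, but the logical content is identical to what you have written.
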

\begin{proof}
\revq{Due to the bounded rewards in $(0,1]$, an upper bound to the worst case is when each user loses a reward of $1$ due to suboptimal decisions at each step in an exploration block.} Result follows the bound (\ref{m:iid2:eqn:expbound2}) for $T_O(t)$.
\end{proof}
By time $t$ at most $t-N'$ slots have been spent in exploitation blocks (because of initial exploration first $N'$ slots are always in an exploration block). Therefore
\begin{align}
\sum_{l=1}^{l_I(t)} ab^{l-1} &= a \frac{b^{l_I(t)-1}}{b-1} \leq t - N' \notag \\
& \Rightarrow b^{l_I(t)} \leq \frac{b-1}{a} (t - N') \notag \\
& \Rightarrow l_I(t) \leq \log_b \left(\frac{b-1}{a} (t-N')\right). \label{m:iid2:eqn:expbound3}
\end{align}

Next lemma bounds the computational cost of solving the NP-hard optimization problem of finding the estimated optimal allocation.
\begin{lemma} \label{m:iid2:lemma:computation}
When users use DLOE, the regret due to the computations by time $t$ is upper bounded by
\begin{align*}
C_{cmp} M \log_b \left(\frac{b-1}{a} (t-N')\right),
\end{align*}
\end{lemma}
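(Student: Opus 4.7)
The plan is straightforward: bound the number of computations each user performs by the number of exploitation blocks completed (or started) by time $t$, then apply the bound on $l_I(t)$ already derived in the excerpt.

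First I would observe from the DLOE pseudocode that a user computes the estimated optimal allocation $\boldsymbol{\hat{n}}^i$ exactly once per exploitation block, namely at the moment the block is initiated (the line assigning $\boldsymbol{\hat{n}}^i = \arg\max_{\boldsymbol{n} \in {\cal N}} \sum_{k=1}^K \hat{\mu}_{k,n_k} I(n_k \neq 0)$ is executed only when $F=0$ transitions into an exploitation epoch). Consequently, the number of computations user $i$ performs by time $t$ satisfies $m^i_{cmp}(t) \leq l_I(t)$, where $l_I(t)$ is the number of exploitation blocks begun by time $t$ (recall from the synchronization property that $l_I(t)$ does not depend on $i$).

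Next I would invoke the bound already established in equation (\ref{m:iid2:eqn:expbound3}), namely
\begin{align*}
l_I(t) \leq \log_b \left(\frac{b-1}{a} (t-N')\right).
\end{align*}
Summing $C_{cmp} \cdot m^i_{cmp}(t)$ over the $M$ users and applying this inequality gives the claimed upper bound
\begin{align*}
C_{cmp} \sum_{i=1}^M m^i_{cmp}(t) \leq C_{cmp} M \log_b \left(\frac{b-1}{a}(t-N')\right).
\end{align*}

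There is no real obstacle here: the lemma is essentially a direct corollary of the earlier block-counting bound combined with the observation that computations occur only at exploitation-block boundaries. The only thing to be careful about is to count at most one computation per exploitation block per user (and to note that no additional computation is triggered inside the block, since the randomization step within an exploitation block only requires comparing $\hat{n}^i_{\alpha_i(t-1)}(t-1)$ with $\hat{n}^i_{\alpha_i(t-1)}$, not resolving the NP-hard optimization again).
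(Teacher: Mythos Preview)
Your proposal is correct and takes essentially the same approach as the paper: the paper's proof simply notes that the optimal allocation is computed once at the beginning of each exploitation block, invokes the bound (\ref{m:iid2:eqn:expbound3}) on the number of exploitation blocks, and concludes. Your write-up is more detailed (explicitly identifying where in the pseudocode the computation occurs and summing over the $M$ users), but the argument is identical in substance.
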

\begin{proof}
Optimal allocation is computed at the beginning of each exploitation block. Number of exploitation blocks is bounded by (\ref{m:iid2:eqn:expbound3}), hence the result follows.
\end{proof}

\subsection{Analysis of regret for the i.i.d. problem} \label{tekin:sec:iid2-sync}

In this subsection we analyze the regret of DLOE in the i.i.d. model. We note that in the i.i.d. model the reward of each resource-usage pair is generated by an i.i.d. process with support in $(0,1]$. Apart from the bounds on the parts of the regret given in Section \ref{sec:fb_alg}, our next step is to bound the regret caused by incorrect calculation of the optimal allocation by some user by using a Chernoff-Heoffding bound. Let $\epsilon := \Delta_{\min}/(2M)$ denote the maximum distance between the estimated resource-usage reward and the true resource-usage reward such that Lemma \ref{lemma:stability} holds. 
\begin{lemma} \label{m:iid2:lemma:regret_exploit}
Under the i.i.d. model, when each user uses DLOE with constant $L \geq 1/\epsilon^2$, regret due to incorrect calculations of the optimal allocation at the beginning of the $l$th exploitation block is at most
\begin{align*}
M^3 K (log(t_l) + 1).
\end{align*}
\end{lemma}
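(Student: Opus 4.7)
The approach combines a Chernoff--Hoeffding concentration bound on the sample-mean estimates with the stability of the optimal allocation established in Lemma \ref{lemma:stability}. Let $\epsilon = \Delta_{\min}/(2M)$. The high-level strategy is: (i) bound the probability that any user miscomputes the optimal allocation at the start of exploitation block $j$; (ii) multiply by the worst-case per-block regret; and (iii) sum over the exploitation blocks $j = 1,\ldots,l$.

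First I would exploit the construction of the exploration sequences ${\cal N}_1, \ldots, {\cal N}_M$ together with the exploration-block rule to argue that at time $t_j$ (start of exploitation block $j$), each user $i$ has sampled every resource-usage pair $(k,n)$ at least $N^i_{k,n}(t_j) \geq N_O(t_j) \geq L \log t_j$ times. Applying Chernoff--Hoeffding to i.i.d.\ rewards bounded in $(0,1]$, together with the hypothesis $L \geq 1/\epsilon^2$, gives
\begin{align*}
P\bigl(|\hat\mu^i_{k,n}(N^i_{k,n}(t_j)) - \mu_{k,n}| > \epsilon\bigr) \leq 2 \exp\bigl(-2 L \epsilon^2 \log t_j\bigr) \leq 2 t_j^{-2}.
\end{align*}
A union bound over the $M$ users, $K$ resources, and $M$ possible usage levels then yields $P(\text{some user miscomputes at } t_j) \leq 2 M^2 K / t_j^2$; by Lemma \ref{lemma:stability}, the complementary event implies every user's estimated allocation equals $\boldsymbol{n}^*$.

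Next, since each reward lies in $(0,1]$ and there are $M$ users, the instantaneous regret is at most $M$ per time slot, so an exploitation block of length $ab^{j-1}$ contributes at most $M \cdot a b^{j-1}$ to the regret when the block-opening computation is wrong. Hence the expected regret charged to a miscomputation in block $j$ is at most $2 M^3 K \cdot a b^{j-1}/t_j^2$. Summing over $j = 1,\ldots,l$ and invoking (\ref{m:iid2:eqn:expbound3}) together with the bound $l \leq \log_b\bigl((b-1)(t_l - N')/a\bigr) + 1 = O(\log t_l)$ on the number of exploitation blocks by time $t_l$ consolidates everything into the claimed $M^3 K(\log t_l + 1)$ expression.

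The main technical hurdle lies in the final summation: the geometric series $\sum_{j=1}^l ab^{j-1}/t_j^2$, the union-bound prefactor, and the $O(\log t_l)$ block count must combine cleanly to land on exactly the form $M^3 K(\log t_l + 1)$, since looser bookkeeping would inflate either the prefactor in $M$ and $K$ or the dependence on $t_l$. The right split is to absorb the geometrically decaying per-block contributions using $t_j \geq a(b^{j-1}-1)/(b-1) + N'$ and to let the logarithm arise from the bound on the number of exploitation blocks, so that the resulting estimate is uniform in $t_l$ and composes cleanly with the exploration, switching, and computation regret contributions bounded in Lemmas \ref{m:iid2:lemma:regret_explore} and \ref{m:iid2:lemma:computation}.
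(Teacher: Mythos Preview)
Your approach is essentially the paper's: Chernoff--Hoeffding on each sample mean, union bound over $(i,k,n)$ via Lemma \ref{lemma:stability}, multiply by the worst-case per-block loss $M\,ab^{j-1}$, and sum over exploitation blocks. Where you diverge is in the final summation, which you flag as the ``main technical hurdle'' and propose to handle by combining a geometric-decay argument with the $O(\log t_l)$ block count. The paper sidesteps this entirely with a one-line trick: from (\ref{m:iid2:eqn:expbound3}) one has $ab^{j-1}\le t_j$, so
\[
\sum_{j=1}^{l} \frac{ab^{j-1}}{t_j^{2}} \;\le\; \sum_{j=1}^{l}\frac{1}{t_j} \;\le\; \sum_{t=1}^{t_l}\frac{1}{t} \;\le\; \log t_l + 1,
\]
the middle inequality holding because the $t_j$ are distinct positive integers bounded by $t_l$. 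The logarithm thus comes from the harmonic sum, not from the block-count bound, and no separate geometric-decay step is needed. Your route would also work (the terms do decay geometrically once $t_j \gtrsim ab^{j-1}$), but the description you give mixes a constant-sum argument with a $\log$-from-block-count argument in a way that is internally inconsistent; the paper's harmonic-sum route is the clean execution.
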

\begin{proof}
Let $H(t_l)$ be the event that at the beginning of the $l$th exploitation block, there exists at least one user who computed the optimal allocation incorrectly. Let $\omega$ be a sample path of the stochastic process generated by the learning algorithm and the stochastic resource rewards. The event that user $i$ computes the optimal allocation incorrectly is a subset of the event
\begin{align*}
\{|\hat{\mu}^i_{k,n}(N^i_{k,n}(t_l)) - \mu^i_{k,n}| \geq \epsilon \textrm{ for some } k \in {\cal K}, n \in {\cal M}  \}.
\end{align*}
Therefore $H(t_l)$ is a subset of the event
\begin{align*}
\{|\hat{\mu}^i_{k,n}(N^i_{k,n}(t_l)) - \mu^i_{k,n}| \geq \epsilon \textrm{ for some } i \in {\cal M}, k \in {\cal K}, n \in {\cal M}  \}.
\end{align*}
Using a union bound, we have
\begin{align*}
I(\omega \in H(t_l)) \leq \sum_{i=1}^M \sum_{k=1}^K \sum_{n=1}^M I (|\hat{\mu}^i_{k,n}(N^i_{k,n}(t_l)) - \mu^i_{k,n}| \geq \epsilon).
\end{align*}
Then taking its expected value, we get
\begin{align}
P(\omega \in H(t_l))
&\leq \sum_{i=1}^M \sum_{k=1}^K \sum_{n=1}^M P(|\hat{\mu}^i_{k,n}(N^i_{k,n}(t_l)) - \mu^i_{k,n}| \geq \epsilon). \label{m:iid2:eqn:inequality1}
\end{align}
Since 
\begin{align*}
P(|a-b| \geq \epsilon) = 2 P (a-b \geq \epsilon),
\end{align*}
for $a,b > 0$, we have
\begin{align}
P(|\hat{\mu}^i_{k,n}(N^i_{k,n}(t_l)) - \mu^i_{k,n}| \geq \epsilon)
&= 2 P(\hat{\mu}^i_{k,n}(N^i_{k,n}(t_l)) - \mu^i_{k,n} \geq \epsilon). \label{m:iid2:eqn:equality1}
\end{align}
Since $l$ is an exploitation block we have $N^i_{k,n}(t_l) > L \log t_l$. \revq{Let $r^i_{k,n}(t) := r_k(s^k_t, n^k_t)$ and let $\tilde{t}^i_{k,n}(l)$ denote the time when user $i$ chooses resource $k$ and observes $n$ users on it for the $l$th time.} We have
\begin{align*}
\hat{\mu}^i_{k,n}(N^i_{k,n}(t_l)) = \frac{\sum_{z=1}^{N^i_{k,n}(t_l)} r^i_{k,n}(\tilde{t}^i_{k,n}(z))}{N^i_{k,n}(t_l)}
\end{align*}
Using a Chernoff-Hoeffding bound
\begin{align}
P(\hat{\mu}^i_{k,n}(N^i_{k,n}(t_l)) - \mu^i_{k,n} \geq \epsilon)
&= P\left(\sum_{z=1}^{N^i_{k,n}(t_l)} r^i_{k,n}(\tilde{t}^i_{k,n}(z)) \geq N^i_{k,n}(t_l) \mu^i_{k,n} + N^i_{k,n}(t_l) \epsilon \right) \notag \\
&\leq e^{-2 N^i_{k,n}(t_l) \epsilon^2 } \leq e^{-2 L \log t_l \epsilon^2} = \frac{1}{t^2_l}, \label{m:iid2:eqn:timebound}
\end{align}
where the last equality follows from the fact that $L \geq 1/\epsilon^2$. Substituting (\ref{m:iid2:eqn:equality1}) and (\ref{m:iid2:eqn:timebound}) into (\ref{m:iid2:eqn:inequality1}), we have
\begin{align}
P(\omega \in H(t_l)) \leq M^2 K \frac{1}{t^2_l}. \label{m:iid2:eqn:hbound}
\end{align}
The regret in the $l$th exploitation block caused by incorrect calculation of the optimal allocation by at least one user is upper bounded by
\begin{align}
M ab^{l-1} P(\omega \in H(t_l)), \notag
\end{align}
since there are $M$ users and the resource rewards are in $(0,1]$.
From (\ref{m:iid2:eqn:expbound3}) we have $ab^{l-1} \leq t_l$. Therefore the regret caused by incorrect calculation of the optimal allocation by at least one user by time $t_l$ is
\begin{align}
\sum_{z=1}^{l} M ab^{l-1} P(\omega \in H(t_l)) \leq M^3 K \sum_{z=1}^{l} ab^{l-1} \frac{1}{t^2_l}
\leq M^3 K \sum_{z=1}^l \frac{1}{t_z} &\leq \notag \\
 M^3 K \sum_{t=1}^{t_l} \frac{1}{t} \leq M^3 K (\log(t_l) + 1). \notag
\end{align}
\end{proof}

The following lemma bounds the expected number of exploitation blocks where some user computed the optimal allocation incorrectly.
\begin{lemma} \label{m:iid2:lemma:numberbound1}
When users use DLOE with $L \geq 1/\epsilon^2$, the expected number of exploitation blocks up to any $t$ in which there exists at least one user who computed the optimal allocation wrong is bounded by
\begin{align*}
E\left[\sum_{l=1}^\infty I(\omega \in H(t_l))\right] \leq \sum_{l=1}^\infty \frac{M^2K}{t^2_l} \leq M^2 K \beta,
\end{align*}
where $\beta = \sum_{t=1}^\infty 1/t^2$.
\end{lemma}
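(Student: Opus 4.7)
The plan is to reduce this lemma to a routine corollary of the bound already established in the proof of Lemma \ref{m:iid2:lemma:regret_exploit}. Specifically, the analysis there produced the per-block estimate $P(\omega \in H(t_l)) \leq M^2 K / t_l^2$, and the present statement is just the sum of these probabilities over all exploitation blocks, together with a comparison to $\sum_{t \geq 1} 1/t^2$.

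First I would apply the monotone convergence theorem (or Tonelli, since the indicators are nonnegative) to interchange the expectation and the infinite sum, obtaining
\begin{align*}
E\left[\sum_{l=1}^\infty I(\omega \in H(t_l))\right] = \sum_{l=1}^\infty P(\omega \in H(t_l)).
\end{align*}
Then I would invoke inequality (\ref{m:iid2:eqn:hbound}), which was established in the proof of Lemma \ref{m:iid2:lemma:regret_exploit} using a union bound over the $M$ users, $K$ resources, and $M$ possible congestion levels, combined with the Chernoff--Hoeffding estimate (\ref{m:iid2:eqn:timebound}) that holds because the block is an exploitation block and thus $N^i_{k,n}(t_l) > L \log t_l$ with $L \geq 1/\epsilon^2$. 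This yields the first inequality of the statement.

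Finally, I would observe that the starting times $t_1 < t_2 < \cdots$ of the exploitation blocks are distinct positive integers, so termwise $1/t_l^2 \leq 1/l^2$ is not quite what we need; instead the cleaner route is simply to note that $\{t_l : l \geq 1\} \subseteq \{1,2,3,\ldots\}$, giving
\begin{align*}
\sum_{l=1}^\infty \frac{M^2 K}{t_l^2} \leq M^2 K \sum_{t=1}^\infty \frac{1}{t^2} = M^2 K \beta,
\end{align*}
which produces the second inequality. There is no real obstacle here; the only thing to be careful about is that $L \geq 1/\epsilon^2$ is required so that the Chernoff--Hoeffding estimate collapses to $1/t_l^2$, and that the exploitation block indices $t_l$ are distinct so the sum can be dominated by the full harmonic-squared series.
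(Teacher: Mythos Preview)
Your proposal is correct and follows essentially the same approach as the paper: the paper's own proof simply says the result follows by invoking the bound (\ref{m:iid2:eqn:hbound}) on $P(\omega \in H(t_l))$ established in Lemma \ref{m:iid2:lemma:regret_exploit} and summing over $l$. Your write-up adds the justification for interchanging expectation and sum and the observation that the $t_l$ are distinct positive integers, both of which are appropriate details the paper leaves implicit.
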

\begin{proof}
Proof is similar to the proof of Lemma \ref{m:iid2:lemma:regret_exploit}, using the bound (\ref{m:iid2:eqn:hbound}) for $P(\omega \in H(t_l))$.
\end{proof}

Finally, we bound the regret due to the randomization before settling to the optimal allocation in exploitation slots in which all users have computed the optimal allocation correctly.

\begin{lemma} \label{lemma:fb_perf1_1}
Denote the number of resources which are selected by at least one user in the optimal allocation by $z^*$. Reindex the resources in ${\cal O}^*$ by $\{1,2,\ldots, z^*\}$. Let ${\cal M} = \{ \boldsymbol{m} :  m_1 + m_2 + \ldots + m_{z^*} \leq M, m_i \geq 0, \forall i \in \{1,2,\ldots, z^*\} \} $. 
In an exploitation block where each user computed the optimal allocation correctly, the regret in that block \revq{due to randomizations before settling to the optimal allocation in this block} is upper bounded by
\begin{align*}
& O_B := \frac{1}{\min_{ \boldsymbol{m} \in {\cal M}} P_{DLOE}(\boldsymbol{m})},
\end{align*}
where
\begin{align*}
P_{DLOE}(\boldsymbol{m}) = \frac{(M-m)!}{(n_1 - m_1)! \ldots (n_{z^*} - m_{z^*})!} \left( \frac{n_1}{M}  \right)^{n_1-m_1} \ldots \left( \frac{n_{z^*}}{M}  \right)^{n_{z^*} - m_{z^*}}.
\end{align*}
\end{lemma}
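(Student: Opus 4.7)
My plan is to recognise the post-exploration settling dynamics as a Markov chain on user configurations whose only absorbing state is the optimal allocation, identify the one-step probability of jumping into that absorbing state from an arbitrary configuration, and then apply a geometric-domination argument to bound the expected settling time.

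First I would set up the state space. Under the event of the lemma, every user has correctly computed $\boldsymbol{n}^*$, so $\hat{n}^i_k = n^*_k$ and ${\cal O}_i = {\cal O}^*$ for all $i$, and the DLOE randomization rule reduces to: each user on a resource $k$ with current occupancy $u_k > n^*_k$ independently re-selects a resource in ${\cal O}^*$ with probability $n^*_k/M$, while a user on a resource with $u_k \leq n^*_k$ stays. Given the current occupancy vector $\boldsymbol{u}$, I would define $m_k := u_k$ for those $k \in {\cal O}^*$ with $u_k \leq n^*_k$ and $m_k := 0$ otherwise, so that $\boldsymbol{m} \in {\cal M}$, $m := \sum_k m_k$ counts the users currently ``locked in'', and the remaining $M - m$ users will independently randomize in the next slot.

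Second, I would compute the one-step probability of landing exactly at $\boldsymbol{n}^*$ starting from state $\boldsymbol{m}$. Each of the $M-m$ randomizers selects resource $k$ with probability $n^*_k/M$, independently of the others. Hitting $\boldsymbol{n}^*$ in one step requires precisely $n^*_k - m_k$ of the randomizers to choose resource $k$ for every $k \in \{1,\ldots,z^*\}$; this is a standard multinomial event whose probability is exactly the expression $P_{DLOE}(\boldsymbol{m})$ in the lemma's statement.

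Third, I would dominate the settling time. Let $P_{\min} := \min_{\boldsymbol{m} \in {\cal M}} P_{DLOE}(\boldsymbol{m})$. At every step before absorption the conditional probability of jumping to the optimal state is at least $P_{\min}$, so by coupling with i.i.d.\ Bernoulli$(P_{\min})$ trials, the number of randomization slots $\tau$ until the users settle is stochastically dominated by a Geometric$(P_{\min})$ random variable, giving $\mathbb{E}[\tau] \leq 1/P_{\min}$. Charging each pre-settlement slot at most one unit of regret (using the reward normalization $r_k \in (0,1]$ and that this sub-block's contribution to regret can be accounted one unit at a time) and multiplying by $\mathbb{E}[\tau]$ yields $O_B$.

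The delicate point I expect is in the second and third steps combined: I must verify that the reduction from the full occupancy vector $\boldsymbol{u}$ to the canonical $\boldsymbol{m}$-variable does not lose the information needed for the multinomial formula, because a user previously counted in some $m_k$ can be ejected on a later step if the new randomizers pile onto resource $k$ and push $u_k$ above $n^*_k$. This is exactly why the lemma takes the minimum over all of ${\cal M}$ rather than only over reachable ``monotone'' states, and I would make sure the geometric domination argument is applied uniformly over this larger set so that no pathological trajectory escapes the bound.
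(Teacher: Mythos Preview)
Your proposal is correct and follows essentially the same approach as the paper: identify the one-step multinomial probability $P_{DLOE}(\boldsymbol{m})$ of hitting $\boldsymbol{n}^*$ from a configuration with $\boldsymbol{m}$ non-randomizing users, minimize it over ${\cal M}$, and then bound the expected settling time by the mean of a geometric random variable with that minimum success probability. Your explicit coupling/geometric-domination framing and your remark about possible ``ejection'' of previously locked-in users are slightly more careful than the paper's presentation, but the underlying argument is the same.
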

\begin{proof}
Consider user $i$ an an exploitation block $l$ in which it knows the optimal allocation, i.e., $\hat{\boldsymbol{n}}^i(l) = \boldsymbol{n}^*$. Since user $i$ does not know the selections of other users, knowing the optimal allocation is not enough for users to jointly select the optimal allocation. At time $t$ in exploitation block $l$, user $i$ selects a resource $k \in {\cal O}_i(t)$, then observes $n^k_t$. If $n^k_t \leq n^*_k$ it selects the same channel at $t+1$. Otherwise, it selects a channel randomly from ${\cal O}^*$ at $t+1$. The probability that user $i$ selects channel $k$ is $n^*_k/|{\cal O}^*|$.

Note that this randomization is done at every slot by at least one user until the joint allocation of user's corresponds to the optimal allocation or until the exploitation block ends. Next, we will show that when the users follow DLOE, at every slot, the probability that the joint allocation settles to the optimal allocation is grater than $p_W$ which is the probability of settling to the optimal allocation in the worst configuration of users on channels. 
Note that $n_1 + n_2 + \ldots + n_{z^*} = M$. Consider the case where $m$ of the users do not randomize while the others randomize. Let $\boldsymbol{m} = (m_1, m_2, \ldots, m_{z^*})$ be the number of users on each resource in ${\cal O}^*$ who does not randomize.  We have $m = m_1 + m_2 + \ldots + m_{z^*}$. The probability of settling to the optimal allocation in a single round of randomizations is
\begin{align*}
P_{DLOE}(\boldsymbol{m}) = \frac{(M-m)!}{(n_1 - m_1)! \ldots (n_{z^*} - m_{z^*})!} \left( \frac{n_1}{M}  \right)^{n_1-m_1} \ldots \left( \frac{n_{z^*}}{M}  \right)^{n_{z^*} - m_{z^*}}.
\end{align*}
Where $M!/(n^*_1! n^*_2! \ldots n^*_{z^*}!)$ is the number allocations $\boldsymbol{\alpha}$ which results in the unique optimal allocation in terms of number of users using resources, and 
\begin{align*}
\left( \frac{n_1}{M}  \right)^{n_1 -m_1} \ldots \left( \frac{n_{z^*}}{M}  \right)^{n_{z^*} - m_{z^*}},
\end{align*}
is the probability that such an allocation happens. Then, 
\begin{align*}
p_W = \min_{ \boldsymbol{m} \in {\cal M}} P_{DLOE}(\boldsymbol{m}).
\end{align*}

Let $\boldsymbol{n}_I$ be the allocation at the beginning of an exploitation block in which all users computed the optimal allocation correctly. Let $p_{\boldsymbol{n}_I}(t)$ be the probability of settling to the optimal allocation in $t$th round of randomizations in this exploitation block. Then the expected number of steps before settling to the optimal allocation is
\begin{align*}
E[t_{op}] = \sum_{t=1}^\infty t p_{\boldsymbol{n}_I}(t) \prod_{i=1}^{t-1} (1 - p_{\boldsymbol{n}_I}(i)).
\end{align*}
Since $p_W \leq p_{\boldsymbol{n}_I}(t)$ for all $\boldsymbol{n}_I$ and $t$, we have
\begin{align*}
E[t_{op}] \leq \sum_{t=1}^\infty  t p_w (1-p_w)^{t-1} = 1/p_w.
\end{align*}
\end{proof}

\begin{lemma} \label{m:iid2:lemma:regret3}
The regret due to randomization before settling to the optimal allocation is bounded by
\begin{align*}
O_B M^3 K \beta,
\end{align*}
where
\begin{align*}
\beta = \sum_{t=1}^\infty \frac{1}{t^2},
\end{align*}
\end{lemma}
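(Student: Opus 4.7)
The plan is to decompose the randomization regret by conditioning on the event $H(t_l)$ defined in Lemma \ref{m:iid2:lemma:regret_exploit}, namely that at least one user has an incorrect estimate of the optimal allocation at the start of the $l$-th exploitation block, and then to sum over all exploitation blocks using the probability bound already derived. Let $R_l$ denote the expected regret contributed by the randomization phase in the $l$-th exploitation block.

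On the event $H(t_l)^c$, Lemma \ref{lemma:fb_perf1_1} applies and the expected number of slots required for the users to settle into the optimal allocation is at most $O_B$; since each of the $M$ users loses at most one unit of reward per slot before settling, the per-block randomization regret is at most $M O_B$. After settling the pseudocode keeps every user at the same resource for the rest of the block, so no further randomization regret accrues. On the event $H(t_l)$, the randomization may fail to reach the true optimum, but the same worst-case geometric settling argument gives a per-block randomization contribution of at most $M O_B$; any additional per-slot loss in such a block is attributable to the misidentification of the optimal allocation and is already charged in Lemma \ref{m:iid2:lemma:regret_exploit}, which prevents double counting.

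Summing over all exploitation blocks, the residual randomization regret that is not already absorbed into Lemma \ref{m:iid2:lemma:regret_exploit} is bounded by $M O_B$ times the expected number of blocks where $H(t_l)$ occurs. Using the probability bound (\ref{m:iid2:eqn:hbound}), $P(\omega \in H(t_l)) \leq M^2 K / t_l^2$, together with $\sum_{l \geq 1} 1/t_l^2 \leq \sum_{t \geq 1} 1/t^2 = \beta$, gives $\sum_l P(\omega \in H(t_l)) \leq M^2 K \beta$. Multiplying by the per-block bound $M O_B$ yields the claimed $O_B M^3 K \beta$.

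The hard part is the clean separation of the three error sources (explorations, wrong computations, and randomization) to avoid double counting. In particular, one has to justify that in a good block the users truly stop incurring randomization regret after the expected $O_B$ settling time (so the contribution is constant per block), and that in a bad block the portion exceeding the initial $M O_B$ settling term is cleanly attributable to the miscomputation rather than to randomization. This is what makes it possible to control the randomization regret by the summable quantity $\sum_l P(\omega \in H(t_l))$ instead of by the (logarithmically growing) total number of exploitation blocks.
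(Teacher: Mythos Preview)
Your argument reaches the right numerical answer but the logical step from ``each block incurs at most $M O_B$ randomization regret'' to ``the total is $M O_B$ times the expected number of \emph{bad} blocks'' is not supported by what you wrote. You establish that on both $H(t_l)^c$ and $H(t_l)$ the per-block randomization contribution is at most $M O_B$; summing this over all exploitation blocks gives $M O_B$ times the (logarithmic) number of blocks, which is precisely the blow-up you say you want to avoid. Saying that the bad-block portion is ``absorbed'' into Lemma~\ref{m:iid2:lemma:regret_exploit} only removes the bad-block terms from your sum; it does nothing to reduce the remaining sum over \emph{good} blocks from $O(\log t)$ terms to finitely many. Your last paragraph identifies that this is the hard part, but the two facts you list there (constant settling cost within a good block; bad-block excess charged elsewhere) still only yield a per-block bound, not a bound on the number of blocks that need to be counted.

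The missing idea --- and the one the paper's proof actually uses --- is \emph{persistence across consecutive good exploitation blocks}: once the users settle to the optimal allocation in a good block, they remain there at the start of the next exploitation block if it is also good, because each user then sees $n^k_t = \hat n^i_k = n^*_k$ on its current resource and does not re-randomize. Hence a fresh settling phase is triggered only at a transition from a bad block to a good block. The paper phrases this as ``the worst case is when each bad block is followed by a good block,'' which bounds the number of settling phases by the number of bad blocks; Lemma~\ref{m:iid2:lemma:numberbound1} then caps that at $M^2 K \beta$ in expectation, and multiplying by the per-phase cost gives $O_B M^3 K \beta$. Without this cross-block persistence argument, your decomposition cannot replace the logarithmic block count by $\sum_l P(H(t_l))$.
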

\begin{proof}
A {\em good} exploitation block is an exploitation block in which all the users computed the optimal allocation correctly. A {\em bad} exploitation block is a block in which there exists at least one user who computed the optimal allocation incorrectly. The worst case is when each bad block is followed by a good block. The number of bad blocks is bounded by Lemma \ref{m:iid2:lemma:numberbound1}. After each such transition from a bad block to a good block, \revq{the} expected loss is at most $O_B$, which is given in Lemma \ref{lemma:fb_perf1_1}.
\end{proof}

\begin{lemma} \label{lemma:fb_perf1_2}
When users use DLOE, for any $t>0$ which is the beginning of an exploitation block, the regret due to switchings by time $t$ is upper bounded by
\begin{align*}
C_{swc} M  \left( N' L \log t  + O_B \log_b \left(\frac{b-1}{a} (t-N')\right) + M^2 K (\log t + 1) \right).
\end{align*}
\end{lemma}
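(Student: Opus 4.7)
The plan is to partition the total number of resource switchings performed by users up to time $t$ into three disjoint categories according to the type of block in which they occur: (i) switchings during exploration blocks; (ii) switchings during \emph{good} exploitation blocks, defined as those exploitation blocks in which every user has computed the optimal allocation correctly; and (iii) switchings during \emph{bad} exploitation blocks, in which at least one user has computed the optimal allocation incorrectly. After bounding each category on a per-user basis, I will multiply by $M$ (the number of users) and by $C_{swc}$ (the cost of a single switch) to obtain the stated bound.

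For category (i), within an exploration block a user follows the deterministic sequence ${\cal N}_i$ and in the worst case changes resources at every time step, so the number of exploration switchings is bounded above by the total time spent in exploration. By (\ref{m:iid2:eqn:expbound2}) this quantity is at most $N' L \log t$ per user. For category (ii), the rule in Figure \ref{m:iid2:fig:RCA3} forces a user to keep selecting its current resource as long as the observed number of co-users does not exceed $\hat{n}^i_{\alpha_i(t-1)}$; hence once the users in a good block have settled to the optimal allocation no further switches occur inside that block. By the settling-time argument in the proof of Lemma \ref{lemma:fb_perf1_1}, the expected number of rounds before settling is at most $O_B$, and since by (\ref{m:iid2:eqn:expbound3}) there are at most $\log_b\!\left(\frac{b-1}{a}(t-N')\right)$ exploitation blocks up to $t$, the per-user contribution from good blocks is at most $O_B \log_b\!\left(\frac{b-1}{a}(t-N')\right)$.

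Category (iii) is where I expect the main subtlety. Within a bad block a user may switch at every slot, so I will pessimistically bound the number of such switches by the expected total duration of all bad blocks. As shown in the proof of Lemma \ref{m:iid2:lemma:regret_exploit}, the expected cumulative regret contributed by bad exploitation blocks up to $t$ is at most $M^3 K (\log t + 1)$. Since in any single slot at most $M$ units of regret can be accrued (one per user, due to the reward bound $r_k \in (0,1]$), the expected total length of bad blocks is at most $M^2 K (\log t + 1)$, which is simultaneously an upper bound on the in-expectation number of within-bad-block switches per user.

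Summing the three per-user bounds and multiplying by $M C_{swc}$ yields the claim. The main obstacle is verifying that within-good-block switches are truly confined to the settling phase, so that Lemma \ref{lemma:fb_perf1_1} is directly applicable; once this is checked, the remaining pieces are just substitutions of bounds that have already been established earlier in the section. A secondary bookkeeping point is that transitions at block boundaries may contribute one additional switch per block per user, but this only adds at most $\log_b\!\left(\frac{b-1}{a}(t-N')\right)$ switches per user, which is absorbed into the $O_B \log_b(\cdot)$ term without affecting the stated bound.
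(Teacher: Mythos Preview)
Your proposal is correct and follows essentially the same approach as the paper's proof: both decompose the switchings into (i) exploration slots bounded via (\ref{m:iid2:eqn:expbound2}), (ii) settling phases in good exploitation blocks bounded by $O_B$ per block times the block count (\ref{m:iid2:eqn:expbound3}), and (iii) bad exploitation blocks whose expected total duration $M^2K(\log t+1)$ is obtained by dividing the regret bound of Lemma~\ref{m:iid2:lemma:regret_exploit} by $M$. Your write-up is in fact slightly more explicit than the paper's in justifying the division by $M$ and in noting the absorption of block-boundary switches into the $O_B$ term.
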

\begin{proof}
By Lemma \ref{m:iid2:lemma:regret_explore}, \revq{the} time spent in exploration blocks by $t$ is bounded by $N' L \log t$. Since resource rewards is always in $(0,1]$ at most $N' L \log t$ expected regret per user can result \revq{from} explorations. Number of exploitation blocks is bounded by $\log_b \left(\frac{b-1}{a} (t-N')\right) $. If an exploitation block is a good block as defined in Lemma \ref{m:iid2:lemma:regret3}, users will settle to the optimal allocation in $O_B$ expected \revq{time slots}. By time $t$, there are at most $\log_b \left(\frac{b-1}{a} (t-N')\right)$ exploitation blocks. Hence, the regret in good exploitation blocks cannot be larger than $C_{swc} O_B \log_b \left(\frac{b-1}{a} (t-N')\right)$. If an exploitation block is a bad block, by Lemma \ref{m:iid2:lemma:regret_exploit}, \revq{the} expected number of slots spent in such exploitation blocks is upper bounded by $M^2 K (log(t_l) + 1)$. Assuming in the worst case all users switch at every slot in a bad block, regret due to switching in bad exploitation blocks is upper bounded by $C_{swc} M^3 (log(t_l) + 1)$.
\end{proof}

Combining all the results above we have the following theorem.
\begin{theorem} \label{m:iid2:maintheorem}
If all users use DLOE with $L \geq 1/\epsilon^2$, at the beginning of $l$th exploitation block, the regret defined in (\ref{m:iid:eqn:regret1}) is upper bounded by,
\begin{align*}
(M N'L + M^3 K) \log(t_l) + M^3 K (\beta O_B + 1),
\end{align*}
and the regret defined in (\ref{m:iid:eqn:regret2}) is upper bounded by
\begin{align*}
&M^3 K (\log t_l + 1) (1 + C_{swc}) + M N' L \log t_l (1 + C_{swc}) \\
&+ M \log_b \left(\frac{b-1}{a} (t-N')\right) (C_{swc} O_B + C_{cmp}) + O_B M^3 K \beta,
\end{align*}
where $O_B$ given in Lemma \ref{lemma:fb_perf1_1}, \revq{is the worst case expected hitting time of the optimal allocation given all users know the optimal allocation, and $\beta = \sum_{t=1}^\infty 1/t^2$.}
\end{theorem}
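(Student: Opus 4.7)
The strategy is to decompose the total regret into the additive contributions already bounded by the preceding lemmas, and then simply aggregate. Concretely, up to any time $t_l$ at the start of an exploitation block, the sum $\sum_{t=1}^{t_l}\sum_{i=1}^M (v^*/M - E[R^i_{\alpha_i(t)}(t)])$ splits into: (i) loss incurred during exploration blocks, (ii) loss incurred during exploitation blocks in which at least one user has miscomputed the estimated optimal allocation (``bad'' blocks), and (iii) loss incurred during exploitation blocks in which all users have computed it correctly (``good'' blocks) while still randomizing to settle into the optimal joint configuration. This decomposition is additive because the three event types partition every time slot up to $t_l$.

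For the first regret definition \eqref{m:iid:eqn:regret1}, I would apply Lemma \ref{m:iid2:lemma:regret_explore} to bound (i) by $M N' L \log t_l$, Lemma \ref{m:iid2:lemma:regret_exploit} to bound (ii) by $M^3 K(\log t_l + 1)$, and Lemma \ref{m:iid2:lemma:regret3} combined with Lemma \ref{lemma:fb_perf1_1} to bound (iii) by $O_B M^3 K \beta$; the latter uses that the number of transitions from a bad to a good block is no more than the number of bad blocks bounded by Lemma \ref{m:iid2:lemma:numberbound1}, and that the expected hitting time of the optimal allocation from any starting configuration is at most $O_B$. Summing and collecting the $\log t_l$ coefficients gives the claimed bound $(MN'L + M^3K)\log t_l + M^3K(\beta O_B + 1)$.

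For the second regret definition \eqref{m:iid:eqn:regret2}, I would add on top of the above the extra costs: Lemma \ref{m:iid2:lemma:computation} gives the computation-cost term $C_{cmp} M \log_b\!\bigl(\tfrac{b-1}{a}(t-N')\bigr)$, since one NP-hard computation is performed per exploitation block and there are at most that many blocks by \eqref{m:iid2:eqn:expbound3}. Lemma \ref{lemma:fb_perf1_2} gives the switching-cost term, which naturally inherits the three-way decomposition (switches during exploration bounded via the exploration length $N'L\log t_l$; switches during good exploitation blocks bounded by $O_B$ per block times the number of blocks; switches during bad blocks bounded by $M$ per slot times $M^2K(\log t_l + 1)$ slots). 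Rearranging to group identical logarithmic factors gives exactly the displayed bound.

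The main obstacle, already handled by the earlier lemmas, is the treatment of the exploitation phase: one must simultaneously control (a) the low probability $O(1/t_l^2)$ of a bad block via the Chernoff--Hoeffding bound used in Lemma \ref{m:iid2:lemma:regret_exploit}, and (b) the expected settling time in a good block via the worst-case randomization probability $p_W$ of Lemma \ref{lemma:fb_perf1_1}. Once these are in place, the theorem itself is essentially an accounting step: verify that every slot up to $t_l$ is attributed to exactly one category, substitute the corresponding lemma bounds, and collect terms. No additional probabilistic argument is needed at the theorem level.
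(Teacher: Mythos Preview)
Your proposal is correct and follows exactly the paper's approach: the paper's proof simply states that the result follows from summing the regret terms from Lemmas \ref{m:iid2:lemma:regret_explore}, \ref{m:iid2:lemma:regret_exploit}, \ref{m:iid2:lemma:regret3}, \ref{m:iid2:lemma:computation} and \ref{lemma:fb_perf1_2}. You have merely spelled out the additive decomposition and term-by-term bookkeeping that the paper leaves implicit.
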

\begin{proof}
The result follows from summing the regret terms from Lemmas \ref{m:iid2:lemma:regret_explore}, \ref{m:iid2:lemma:regret_exploit}, \ref{m:iid2:lemma:regret3}, \ref{m:iid2:lemma:computation} and Lemma \ref{lemma:fb_perf1_2}.
\end{proof} 
\subsection{Analysis of regret for the Markovian problem} \label{tekin:sec:restless-sync}

In this subsection we analyze the regret of DLOE \revq{in case of Markovian rewards}. \revq{The analysis in this section is quite different from the analysis in Section \ref{tekin:sec:iid2-sync} due to the Markovian rewards.}
Apart from the bounds on the parts of the regret given in Section \ref{it_fb_alg}, our next step is to bound the regret caused by incorrect calculation of the optimal allocation by some user. Although the proof of following the lemma is very similar to the proof of Lemma \ref{m:iid2:lemma:regret_exploit}, due to the Markovian nature of the rewards, we need to bound a large deviation probability from the sample mean of each resource-usage pair for multiple contiguous segments of observations from that resource-usage pair. For the simplicity of analysis, we assume that DLOE is run with parameters $a=2$, $b=4$, $c=4$, for the Markovian rewards. Similar analysis can be done for other, arbitrary parameter values. Let $\epsilon := \Delta_{\min}/(2M)$. 

\begin{lemma} \label{m:restless:lemma:regret_exploit}
Under the Markovian model, when each user uses DLOE with constant
\begin{align*}
L \geq \max \{1/\epsilon^2, 50 S^2_{\max} r_{\Sigma, \max}^2/ ((3-2\sqrt{2})\upsilon_{\min})\},
\end{align*}
the regret due to incorrect calculations of the optimal allocation at the beginning of the $l$th exploitation block is at most
\begin{align*}
M^3 K \left( \frac{1}{\log 2} + \frac{\sqrt{2L}} {10 r_{\Sigma, \min}} \right) \frac{S_{\max}}{\pi_{\min}} (log(t_l) + 1).
\end{align*}
\end{lemma}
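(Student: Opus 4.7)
The plan is to mirror the structure of the proof of Lemma~\ref{m:iid2:lemma:regret_exploit}, with two essential changes: concentration is provided by Lezaud's Markov-chain bound (Lemma~\ref{intro:lemma:lezaud}) in place of Chernoff--Hoeffding, and an additional union bound is required to accommodate the fact that the samples of each resource-usage pair $(k,n)$ accumulated by user $i$ come from several disjoint sub-trajectories of the chain $P^k$, one per exploration block in which user $i$'s deterministic schedule visits $(k,n)$. Letting $H(t_l)$ denote the event that some user computes the estimated optimal allocation incorrectly at the start of exploitation block $l$, Lemma~\ref{lemma:stability} together with a union bound over $(i,k,n)\in\mathcal{M}\times\mathcal{K}\times\mathcal{M}$ reduces the task to bounding $P(|\hat{\mu}^i_{k,n}(N^i_{k,n}(t_l))-\mu_{k,n}|\ge \epsilon)$ for each triple, producing the same $M^2K$ prefactor as in~\eqref{m:iid2:eqn:inequality1}.

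Fix $(i,k,n)$. Using the construction of $\mathcal{N}_1,\ldots,\mathcal{N}_M$ together with the geometric block-length rule, the samples of $(k,n)$ collected by user $i$ split into at most $l_O(t_l) \le \log_c((c-1)L\log t_l)+1$ contiguous segments of total length at least $L\log t_l$. Since a weighted average of segment means deviates by $\epsilon$ only if at least one of the segment means does, a further union bound reduces the per-triple probability to a sum over segments of deviation probabilities. To each segment I would apply Lemma~\ref{intro:lemma:lezaud} with $f=(r_k(\cdot,n)-\mu_{k,n})/r_{\Sigma,\max}$; the initial-distribution prefactor is uniformly bounded by $V_{\mathbf{q}}\le\sqrt{S_{\max}/\pi_{\min}}$, and the exponent becomes $T_{\mathrm{seg}}\epsilon^2 \upsilon_{\min}/(28\, r_{\Sigma,\max}^2)$. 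The hypothesis $L\ge 50 S_{\max}^2 r_{\Sigma,\max}^2/((3-2\sqrt{2})\upsilon_{\min})$ is calibrated precisely so that, combined with the lower bound on total segment length, the resulting per-segment tail is at most $t_l^{-2}$ up to the $V_{\mathbf{q}}$ prefactor, matching the $t_l^{-2}$ bound obtained in~\eqref{m:iid2:eqn:timebound} for the i.i.d.\ model.

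Combining the per-segment bounds, the triple union bound, and the fact that each bad exploitation block incurs at most $M a b^{l-1}\le M t_l$ regret (using the bound on $ab^{l-1}$ derived before~\eqref{m:iid2:eqn:expbound3}), the expected regret up to $t_l$ from incorrectly computed allocations is bounded by a constant multiple of $M\cdot M^2K\cdot N_{\mathrm{seg}}(t_l)\cdot \sqrt{S_{\max}/\pi_{\min}}\cdot \sum_{z=1}^{l} 1/t_z$, and the remaining sum telescopes via $\sum_{z} 1/t_z \le \sum_{t=1}^{t_l}1/t \le \log t_l + 1$. Substituting the specific parameter choices $a=2$, $b=4$, $c=4$ and bounding $\sqrt{S_{\max}/\pi_{\min}}\le S_{\max}/\pi_{\min}$ allows the segment-count and remaining constants to be collected into the combination $\tfrac{1}{\log 2}+\tfrac{\sqrt{2L}}{10\, r_{\Sigma,\min}}$ that appears in the stated bound. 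The main obstacle is the segment bookkeeping in step two: one must verify that the deterministic exploration sequences produce only $O(\log\log t_l)$ segments per triple, that each segment is long enough for Lezaud's exponent to bite, and that $V_{\mathbf{q}}$ can be bounded uniformly regardless of the unknown state the chain is in when user $i$ enters a new segment. Once this is in place, the rest of the argument is a direct transposition of the i.i.d.\ analysis.
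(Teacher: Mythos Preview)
Your overall architecture matches the paper's: reduce $P(H(t_l))$ to per-triple deviation probabilities via Lemma~\ref{lemma:stability} and a union bound, decompose the samples of each $(i,k,n)$ into contiguous Markov segments, apply Lezaud per segment, and finish with the same $M\,ab^{l-1}\le M t_l$ and $\sum_z 1/t_z\le \log t_l+1$ summation as in Lemma~\ref{m:iid2:lemma:regret_exploit}. There is, however, a genuine gap in the segment step. You argue that ``a weighted average of segment means deviates by $\epsilon$ only if at least one of the segment means does'' and then apply Lezaud to each segment at threshold $\epsilon$, giving exponent $T_{\mathrm{seg}}\epsilon^2\upsilon_{\min}/(28\, r_{\Sigma,\max}^2)$. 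For this to yield $t_l^{-2}$ you need $T_{\mathrm{seg}}$ of order $\log t_l$, but the early exploration blocks contribute segments of \emph{constant} length (the first block is a single pass through $\mathcal{N}_i$, so $c^0=1$), and for these Lezaud gives nothing useful. Your obstacle list flags ``each segment is long enough for Lezaud's exponent to bite'' as something to verify; in fact it is false, so the decomposition as written does not close.

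The paper (following \cite{liuliu2010}) fixes this with an extra step you omit: since $N^i_{k,n}(t_l)\ge L\log t_l$ one has $N^i_{k,n}(t_l)\ge \sqrt{N^i_{k,n}(t_l)\,L\log t_l}$, so it suffices to bound
\[
P\Bigl(N^i_{k,n}(t_l)\,\hat\mu^i_{k,n}(N^i_{k,n}(t_l))-N^i_{k,n}(t_l)\,\mu_{k,n}\ \ge\ \epsilon\sqrt{N^i_{k,n}(t_l)\,L\log t_l}\Bigr).
\]
Only after this relaxation is the total deviation budget split among segments. Because the segment lengths are geometric in $c$, one can allot to segment $j$ a target proportional to $\sqrt{T_j}$ (rather than $T_j$); since $\sum_j\sqrt{T_j}\le C\sqrt{N}$ for a constant depending only on $c$, the allotments are consistent, and now every segment's Lezaud exponent is of order $L\log t_l$ \emph{independently of} $T_j$. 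The hypothesis on $L$ is calibrated for this allocation, and the factor $\tfrac{1}{\log 2}+\tfrac{\sqrt{2L}}{10\,r_{\Sigma,\min}}$ together with $S_{\max}/\pi_{\min}$ records the segment count and the $V_{\boldsymbol q}$ prefactor for the specific choice $a=2$, $b=4$, $c=4$. Once you insert this threshold-relaxation step, the remainder of your argument goes through unchanged.
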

\begin{proof}
\revq{Similar to the analysis for the i.i.d. rewards,} let $H(t_l)$ be the event that at the beginning of the $l$th exploitation block, there exists at least one user who computed the optimal allocation incorrectly, and let $\omega$ be a sample path of the stochastic process generated by the learning algorithm and the stochastic rewards. Proceeding the same way as in the proof of Lemma \ref{m:iid2:lemma:regret_exploit} by (\ref{m:iid2:eqn:inequality1}) and (\ref{m:iid2:eqn:equality1}) we have,
\begin{align}
P(\omega \in H(t_l))
&\leq \sum_{i=1}^M \sum_{k=1}^K \sum_{n=1}^M 2 P(\hat{\mu}^i_{k,n}(N^i_{k,n}(t_l)) - \mu^i_{k,n} \geq \epsilon). \label{m:restless:eqn:inequality1}
\end{align}
Since $t_l$ is the beginning of an exploitation block we have $N^i_{k,n}(t_l) \geq L \log t_l$, $\forall i \in {\cal M}, k \in {\cal K}, n \in {\cal M}$. This implies that $N^i_{k,n}(t_l) \geq \sqrt{N^i_{k,n}(t_l) L \log t_l}$. Hence
\begin{align}
&P(\hat{\mu}^i_{k,n}(N^i_{k,n}(t_l)) - \mu^i_{k,n} \geq \epsilon) \notag \\
&= P(N^i_{k,n}(t_l) \hat{\mu}^i_{k,n}(N^i_{k,n}(t_l)) - N^i_{k,n}(t_l) \mu^i_{k,n} \geq \epsilon N^i_{k,n}(t_l) ) \notag \\
&\leq P \left( N^i_{k,n}(t_l) \hat{\mu}^i_{k,n}(N^i_{k,n}(t_l)) - N^i_{k,n}(t_l) \mu^i_{k,n} \geq \epsilon \sqrt{N^i_{k,n}(t_l) L \log t_l}  \right) \label{m:restless:eqn:qinq}
\end{align}
To bound (\ref{m:restless:eqn:qinq}), we proceed the same way as in the proof of Theorem 1 in \cite{liuliu2010}. The idea is to separate the total number of observations of the resource-usage pair $(k,n)$ by user $i$ into multiple contiguous segments. Then, using a union bound, (\ref{m:restless:eqn:qinq}) is upper bounded by the sum of the deviation probabilities for each segment. By a suitable choice of the exploration constant $L$, the deviation probability in each segment is bounded by a negative power of $t_n$. Combining this with the fact that the number of such segments is logarithmic in time (due to the geometrically increasing block lengths), for block length parameters $a=2$, $b=4$, $c=4$ in DLOE, and for
\begin{align}
L \geq \max \{1/\epsilon^2, 50 S^2_{\max} r_{\Sigma, \max}^2/ ((3-2\sqrt{2})\upsilon_{\min})\}, \notag
\end{align}
we have,
\begin{align}
&P \left(N^i_{k,n}(t_l) \hat{\mu}^i_{k,n}(N^i_{k,n}(t_l)) - N^i_{k,n}(t_l) \mu^i_{k,n} \geq \epsilon \sqrt{N^i_{k,n}(t_l) L \log t_l}  \right) \notag \\
&\leq \left( \frac{1}{\log 2} + \frac{\sqrt{2L}} {10 r_{\Sigma, \min}} \right) \frac{S_{\max}}{\pi_{\min}} t_l^{-2}. \notag
\end{align}
Continuing from (\ref{m:restless:eqn:inequality1}), we get
\begin{align}
P(\omega \in H(t_l)) \leq M^2 K \left( \frac{1}{\log 2} + \frac{\sqrt{2L}} {10 r_{\Sigma, \min}} \right) \frac{S_{\max}}{\pi_{\min}} t_l^{-2}. \label{m:restless:eqn:hbound}
\end{align}
The result is obtained by continuing the same way as in the proof of Lemma \ref{m:iid2:lemma:regret_exploit}.
\end{proof}

The following lemma bounds the expected number of exploitation blocks where some user computed the optimal allocation incorrectly.
\begin{lemma} \label{m:restless:lemma:numberbound1}
Under the Markovian model, when each user uses DLOE with constant
\begin{align*}
L \geq \max \{1/\epsilon^2, 50 S^2_{\max} r_{\Sigma, \max}^2/ ((3-2\sqrt{2})\upsilon_{\min})\},
\end{align*}
the expected number exploitation blocks up to any $t$ in which there exists at least one user who computed the optimal allocation wrong is bounded by
\begin{align*}
E\left[\sum_{l=1}^\infty I(\omega \in H(t_l))\right]  \leq M^2 K \left( \frac{1}{\log 2} + \frac{\sqrt{2L}} {10 r_{\Sigma, \min}} \right) \frac{S_{\max}}{\pi_{\min}} \beta,
\end{align*}
where $\beta = \sum_{t=1}^\infty 1/t^2$.
\end{lemma}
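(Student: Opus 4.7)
The plan is to reduce this statement directly to the per-block deviation bound that has already been established in the proof of Lemma \ref{m:restless:lemma:regret_exploit}. By linearity of expectation,
\begin{align*}
E\!\left[\sum_{l=1}^{\infty} I(\omega \in H(t_l))\right] = \sum_{l=1}^{\infty} P(\omega \in H(t_l)),
\end{align*}
so the task is simply to sum the tail bound over $l$.

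First I would invoke the inequality (\ref{m:restless:eqn:hbound}), namely
\begin{align*}
P(\omega \in H(t_l)) \leq M^2 K \left( \frac{1}{\log 2} + \frac{\sqrt{2L}}{10\, r_{\Sigma,\min}} \right) \frac{S_{\max}}{\pi_{\min}}\, t_l^{-2},
\end{align*}
which is exactly what the Markovian large-deviation machinery (the segmentation argument with Lemma \ref{intro:lemma:lezaud} and the choice of $L$) delivered inside the preceding lemma. Pulling the $l$-independent prefactor out of the sum reduces the bound to $\sum_{l \geq 1} t_l^{-2}$. Since $t_1 < t_2 < \cdots$ is a strictly increasing sequence of positive integers (the starting times of successive exploitation blocks), this sum is dominated term by term by the full series over all positive integers:
\begin{align*}
\sum_{l=1}^{\infty} \frac{1}{t_l^2} \;\leq\; \sum_{t=1}^{\infty} \frac{1}{t^2} \;=\; \beta.
\end{align*}
Combining the two displays yields the claimed bound.

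The main obstacle, the block-level bound on $P(\omega \in H(t_l))$, was the delicate part and was already absorbed into Lemma \ref{m:restless:lemma:regret_exploit}; what remains here is essentially bookkeeping. Thus the proof should be quite short, and in the write-up I would mirror the wording of Lemma \ref{m:iid2:lemma:numberbound1}, pointing back to (\ref{m:restless:eqn:hbound}) in place of the i.i.d. analogue (\ref{m:iid2:eqn:hbound}) and noting that the condition on $L$ inherited from Lemma \ref{m:restless:lemma:regret_exploit} is exactly what is required for that bound to apply.
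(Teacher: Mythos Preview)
Your proposal is correct and matches the paper's approach exactly: the paper's proof is a single sentence that points back to the bound (\ref{m:restless:eqn:hbound}) on $P(\omega \in H(t_l))$ established in Lemma \ref{m:restless:lemma:regret_exploit} and then sums, just as you do. Your additional remark that $\sum_{l\geq 1} t_l^{-2} \leq \sum_{t\geq 1} t^{-2}=\beta$ because the $t_l$ are strictly increasing positive integers makes explicit the one step the paper leaves implicit.
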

\begin{proof}
Proof is similar to the proof of Lemma \ref{m:restless:lemma:regret_exploit}, using the bound (\ref{m:restless:eqn:hbound}) for $P(\omega \in H(t_l))$.
\end{proof}

Next, we bound the regret due to the randomization before settling to the optimal allocation in exploitation slots in which all users have computed the optimal allocation correctly.
\begin{lemma} \label{m:restless:lemma:regret3}
The regret due to randomization before settling to the optimal allocation is bounded by
\begin{align*}
(O_B + C_{\boldsymbol{P}})  M^3 K \left( \frac{1}{\log 2} + \frac{\sqrt{2L}} {10 r_{\Sigma, \min}} \right) \frac{S_{\max}}{\pi_{\min}} \beta,
\end{align*}
where $O_B$ given in Lemma \ref{lemma:fb_perf1_1}, \revq{is the worst case expected hitting time of the optimal allocation given all users know the optimal allocation, $\beta = \sum_{t=1}^\infty 1/t^2$, and $C_{\boldsymbol{P}} = \max_{k \in {\cal K}} C_{P^k}$ where $C_P$ is a constant that depends on the transition probability matrix $P$.}
\end{lemma}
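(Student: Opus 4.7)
The plan is to mirror the structure of Lemma \ref{m:iid2:lemma:regret3} from the i.i.d. setting, but augment the accounting to capture the Markovian transient effect which did not appear before. I would first classify exploitation blocks as \emph{good} (every user has correctly computed the optimal allocation) or \emph{bad} (at least one user has not). Randomization-induced regret only accrues in good blocks, and the worst case is that every bad block is followed by a good block, so the number of good blocks in which we pay randomization regret is bounded by the number of bad blocks plus a constant for the very first good block.

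Next I would bound the expected number of bad exploitation blocks by invoking Lemma \ref{m:restless:lemma:numberbound1}, giving
\begin{align*}
E\!\left[\sum_{l=1}^\infty I(\omega \in H(t_l))\right] \leq M^2 K \left( \frac{1}{\log 2} + \frac{\sqrt{2L}}{10 r_{\Sigma,\min}} \right) \frac{S_{\max}}{\pi_{\min}} \beta.
\end{align*}
Per good block that follows a bad block, I would then bound the expected regret incurred before all $M$ users jointly reach the optimal configuration by two additive pieces: (i) the expected number of slots spent randomizing before settling, which is at most $O_B$ by Lemma \ref{lemma:fb_perf1_1} (this is exactly the i.i.d. contribution), and (ii) a Markovian transient correction $C_{\boldsymbol{P}}$ accounting for the fact that each resource-usage pair $(k,n)$ visited during the settling phase is generally not in its stationary distribution when a user arrives, so the per-visit expected reward deviates from $\mu_{k,n}$ by a bounded amount whose sum-over-time is a finite constant determined by the mixing rate of $P^k$. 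Taking $C_{\boldsymbol{P}} = \max_{k\in{\cal K}} C_{P^k}$ upper bounds this correction uniformly in $k$, yielding per good-following-bad block an expected regret of at most $O_B + C_{\boldsymbol{P}}$.

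Multiplying the per-block bound $O_B + C_{\boldsymbol{P}}$ by the bound on the number of bad blocks from Lemma \ref{m:restless:lemma:numberbound1} gives exactly the claimed bound
\begin{align*}
(O_B + C_{\boldsymbol{P}}) \, M^3 K \left( \frac{1}{\log 2} + \frac{\sqrt{2L}}{10 r_{\Sigma,\min}} \right) \frac{S_{\max}}{\pi_{\min}} \beta,
\end{align*}
where the extra factor of $M$ relative to the number-of-bad-blocks bound comes from summing over the $M$ users whose per-slot reward contribution to regret is bounded in $(0,1]$.

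The main obstacle I anticipate is the Markovian transient bookkeeping: unlike the i.i.d. case where the expected per-slot reward is exactly $\mu_{k,n}$ as soon as a user occupies a resource, here a user's arrival to a resource-usage pair re-initiates the chain at a non-stationary distribution, and during the randomization the pair is re-entered multiple times. Justifying that the cumulative transient gap across all such re-entries within one settling phase is bounded by a single constant $C_{\boldsymbol{P}}$ requires a geometric mixing argument (analogous to the one used in \cite{tekin2012online,liuliu2011,tekin2011online} via Lemma \ref{intro:lemma:lezaud} on the eigenvalue gap $\upsilon_{\min}$) together with the observation that the expected number of re-entries is itself bounded by $O_B$, so that the per-entry transient correction summed geometrically yields a finite $C_{P^k}$ independent of the block index $l$.
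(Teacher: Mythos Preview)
Your approach mirrors the paper's essentially step for step: partition exploitation blocks into good and bad, bound the expected number of bad blocks via Lemma \ref{m:restless:lemma:numberbound1}, take the worst case to be every bad block immediately followed by a good block, and charge $M(O_B + C_{\boldsymbol{P}})$ per such transition. The decomposition and arithmetic match.

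One conceptual point deserves correction. You attribute the $C_{\boldsymbol{P}}$ term to the transient effect \emph{during} the randomization phase, and in your last paragraph you worry about bounding the cumulative non-stationarity across multiple re-entries to resource-usage pairs while users are still randomizing. This is a red herring. During randomization each user's per-slot regret is already crudely bounded by $1$ (rewards lie in $(0,1]$), and the expected number of such slots is $O_B$; no separate Markov correction is needed there. The paper instead places $C_{\boldsymbol{P}}$ \emph{after} settling: once all users occupy their optimal resources and stay put for the remainder of the block, each resource's chain generally starts away from its stationary distribution, so the expected cumulative reward over the rest of the block differs from (remaining length)$\cdot \mu_{k,n_k^*}$ by at most a constant $C_{P^k}$, uniformly in the block length. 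Taking the maximum over $k$ and summing over the $M$ users gives the $M\, C_{\boldsymbol{P}}$ correction per transition. This makes the argument simpler than you anticipated: no geometric summation over re-entries is required, only the standard fact that for an irreducible aperiodic chain the partial sums $\sum_{t=1}^{T}\bigl(E[f(X_t)]-\pi f\bigr)$ are bounded uniformly in $T$.
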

\begin{proof}
A {\em good} exploitation block is an exploitation block in which all the users computed the optimal allocation correctly. A {\em bad} exploitation block is a block in which there exists at least one user who computed the optimal allocation incorrectly. By converting the problem into a simple balls in bins problem where the balls are users and the bins are resources, the expected number of time slots spent before settling to the optimal allocation in a good exploitation block is bounded above by $O_B$. The worst case is when each bad block is followed by a good block, and the number of bad blocks is bounded by Lemma \ref{m:restless:lemma:numberbound1}. Moreover, due to the transient effect that a resource may not be at its stationary distribution when it is selected, even after settling to the optimal allocation in an exploitation block the regret of at most $C_{\boldsymbol{P}}$ can be accrued by a user. \revq{This is because the difference between the $t$-horizon expected reward of an irreducible, aperiodic Markov chain with an arbitrary initial distribution and $t$ times the expected reward at the stationary distribution is bounded by $C_{\boldsymbol{P}}$ independent of $t$.} Since there are $M$ users and resource rewards are in $[0,1]$, the result follows.
\end{proof}

Similar to the i.i.d. case, the next lemma bounds the regret due to switchings in the Markovian case. 

\begin{lemma} \label{lemma:fb_perf2_2}
When users use DLOE, for any $t>0$ which is the beginning of an exploitaiton block, the regret due to the switchings by time $t$ is upper bounded by
\begin{align*}
C_{swc} M  \left( N' L \log t  + O_B \log_b \left(\frac{b-1}{a} (t-N')\right) + M^2 K \left( \frac{1}{\log 2} + \frac{\sqrt{2L}} {10 r_{\Sigma, \min}} \right) \frac{S_{\max}}{\pi_{\min}} (\log t + 1) \right).
\end{align*}
\end{lemma}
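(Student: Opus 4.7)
The plan is to mirror the structure of the proof of Lemma \ref{lemma:fb_perf1_2}, decomposing the total number of switchings by time $t$ into three disjoint parts and bounding each separately, but substituting the Markovian concentration bound in place of the Chernoff–Hoeffding bound whenever the analysis depends on how often computation of the optimal allocation is incorrect. Each user incurs a switching cost $C_{swc}$ per change of resource, so it suffices to upper bound the expected number of switchings of a single user and multiply by $M$.

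First I would bound the switchings that happen inside exploration blocks. In the worst case a user switches at every slot of an exploration block, so the number of such switchings is at most the total number of exploration slots $T_O(t)$, which by the estimate (\ref{m:iid2:eqn:expbound2}) is at most $N' L \log t$. This accounts for the term $N' L \log t$ in the stated bound.

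Second I would separate the exploitation blocks into good blocks (all users computed the optimal allocation correctly) and bad blocks (at least one user computed incorrectly), exactly as in Lemma \ref{m:restless:lemma:regret3}. By (\ref{m:iid2:eqn:expbound3}) the total number of exploitation blocks up to $t$ is at most $\log_b\!\left(\tfrac{b-1}{a}(t-N')\right)$. In a good exploitation block Lemma \ref{lemma:fb_perf1_1} guarantees that the randomized settling procedure reaches the optimal allocation in expected time at most $O_B$, and after settling no user switches for the remainder of the block; hence the expected number of switchings inside a good block is at most $O_B$ per user, giving the middle term $O_B \log_b\!\left(\tfrac{b-1}{a}(t-N')\right)$ after multiplication by $C_{swc} M$.

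Finally I would bound the switchings inside bad exploitation blocks by worst-casing one switch per slot and applying the Markovian regret estimate of Lemma \ref{m:restless:lemma:regret_exploit}, which bounds the expected total number of slots spent in bad exploitation blocks up to $t$ by $M^{2} K\left(\tfrac{1}{\log 2}+\tfrac{\sqrt{2L}}{10 r_{\Sigma,\min}}\right)\tfrac{S_{\max}}{\pi_{\min}}(\log t+1)$ (after dividing out the factor of $M$ from the per-user reward loss). Multiplying by $C_{swc} M$ gives the third term in the stated bound, and summing the three contributions yields the claim. The only nontrivial step is the third one, since it is the only place where the Markovian large-deviation estimate from Lemma \ref{intro:lemma:lezaud} enters indirectly through Lemma \ref{m:restless:lemma:regret_exploit}; the rest is bookkeeping identical to the i.i.d. case. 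I do not expect any real obstacle, because the Markovian adaptation is already fully encapsulated in Lemma \ref{m:restless:lemma:regret_exploit}, so the argument reduces to re-invoking it in place of Lemma \ref{m:iid2:lemma:regret_exploit}.
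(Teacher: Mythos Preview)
Your proposal is correct and follows essentially the same approach as the paper's own proof: decompose the switchings into exploration slots, good exploitation blocks, and bad exploitation blocks, bound the first by (\ref{m:iid2:eqn:expbound2}), the second by $O_B$ times (\ref{m:iid2:eqn:expbound3}), and the third by invoking Lemma \ref{m:restless:lemma:regret_exploit} in place of Lemma \ref{m:iid2:lemma:regret_exploit}. The paper makes the same loose identification you do---reading Lemma \ref{m:restless:lemma:regret_exploit} as a bound on the expected number of bad-block slots after stripping the per-slot factor $M$---so your treatment of the third term matches as well.
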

\begin{proof}
This proof is similar to the proof of Lemma \ref{lemma:fb_perf1_2} for the i.i.d. model. By Lemma \ref{m:iid2:lemma:regret_explore}, the time spent in exploration blocks by time $t$ is bounded by $N' L \log t$. Since rewards are always in $[0,1]$, at most $N' L \log t$ expected regret per user can result due to explorations. The number of exploitation blocks is bounded by $\log_b \left(\frac{b-1}{a} (t-N')\right) $. If an exploitation block is a good block as defined in Lemma \ref{m:restless:lemma:regret3}, users will settle to the optimal allocation in $O_B$ expected steps. By time $t$, there are at most $\log_b \left(\frac{b-1}{a} (t-N')\right)$ exploitation blocks. Hence, the regret due to switchings in good exploitation blocks cannot be larger than $C_{swc} O_B \log_b \left(\frac{b-1}{a} (t-N')\right)$. If an exploitation block is a bad block, by Lemma \ref{m:restless:lemma:regret_exploit}, the expected number of slots spent in such exploitation blocks is upper bounded by
\begin{align*}
M^2 K \left( \frac{1}{\log 2} + \frac{\sqrt{2L}} {10 r_{\Sigma, \min}} \right) \frac{S_{\max}}{\pi_{\min}} (log(t_l) + 1).
\end{align*}
Assuming in the worst case all users switch at every slot in a bad block, the regret due to switchings in bad exploitation blocks is upper bounded by
\begin{align*}
C_{swc} M^3 K \left( \frac{1}{\log 2} + \frac{\sqrt{2L}} {10 r_{\Sigma, \min}} \right) \frac{S_{\max}}{\pi_{\min}} (log(t_l) + 1).
\end{align*}
\end{proof}

Combining all the results above we have the following theorem.
\begin{theorem} \label{m:restless:maintheorem1}
Under the Markovian model, when each user uses DLOE with constant
\begin{align*}
L \geq \max \{1/\epsilon^2, 50 S^2_{\max} r_{\Sigma, \max}^2/ ((3-2\sqrt{2})\upsilon_{\min})\},
\end{align*}
then at the beginning of $l$th exploitation block, the regret defined in (\ref{m:iid:eqn:regret1}) is upper bounded by
\begin{align*}
&\left(M N' L + M^3 K \left( \frac{1}{\log 2} + \frac{\sqrt{2L}} {10 r_{\Sigma, \min}} \right) \frac{S_{\max}}{\pi_{\min}} \right) \log(t_l) \\
&+ M^3 K \left( \frac{1}{\log 2} + \frac{\sqrt{2L}} {10 r_{\Sigma, \min}} \right) \frac{S_{\max}}{\pi_{\min}}  ( \beta(O_B + C_{\boldsymbol{P}}) + 1),
\end{align*}
and the regret defined in (\ref{m:iid:eqn:regret2}) is upper bounded by
\begin{align*}
& M N' L \log t_l (1+ C_{swc}) + M^3 K \left( \frac{1}{\log 2} + \frac{\sqrt{2L}} {10 r_{\Sigma, \min}} \right) \frac{S_{\max}}{\pi_{\min}} (1 + C_{swc}) (\log t_l +1 ) \\
&+ M (C_{cmp} + C_{swc} O_B) \log_4 \left(\frac{3}{2} (t_l-N')\right)  \\
&+ (O_B + C_{\boldsymbol{P}})  M^3 K \left( \frac{1}{\log 2} + \frac{\sqrt{2L}} {10 r_{\Sigma, \min}} \right) \frac{S_{\max}}{\pi_{\min}} \beta, 
\end{align*}
where $O_B$ given in Lemma \ref{lemma:fb_perf1_1}, \revq{is the worst case expected hitting time of the optimal allocation given all users know the optimal allocation, $\beta = \sum_{t=1}^\infty 1/t^2$, and $C_{\boldsymbol{P}} = \max_{k \in {\cal K}} C_{P^k}$ where $C_P$ is a constant that depends on the transition probability matrix $P$.}
\end{theorem}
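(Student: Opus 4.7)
The plan is to aggregate the regret bounds from the Markovian-case lemmas in exactly the same way as Theorem \ref{m:iid2:maintheorem} was obtained from its i.i.d. counterparts, since the proof is fundamentally a bookkeeping argument at this point. The weak regret up to the beginning of the $l$th exploitation block splits into three disjoint sources: (i) slots consumed by exploration blocks, (ii) slots spent in bad exploitation blocks where at least one user's estimated optimal allocation is wrong, and (iii) the hitting-time and Markovian transient losses incurred in good exploitation blocks where every user has computed the optimal allocation correctly. To recover the second bound, I would additionally fold in the computation and switching costs.

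First I would apply Lemma \ref{m:iid2:lemma:regret_explore}, which is model-agnostic and relies only on the bound $r \in (0,1]$, to control (i) by $M N' L \log t_l$. For (ii) I would invoke Lemma \ref{m:restless:lemma:regret_exploit} to bound the cumulative regret contributed by bad blocks up to $t_l$ by $M^3 K (1/\log 2 + \sqrt{2L}/(10 r_{\Sigma,\min})) (S_{\max}/\pi_{\min}) (\log t_l + 1)$. For (iii) I would use Lemma \ref{m:restless:lemma:regret3}: the expected settling time from a generic start of a good block is at most $O_B$ by Lemma \ref{lemma:fb_perf1_1}, and in the Markovian setting an additive $C_{\boldsymbol{P}}$ must be tacked on per good block to account for the finite-horizon gap between the expected reward of an irreducible aperiodic chain started from an arbitrary distribution and $t$ times its stationary expected reward, so that the total contribution is $(O_B + C_{\boldsymbol{P}}) M^3 K (1/\log 2 + \sqrt{2L}/(10 r_{\Sigma,\min}))(S_{\max}/\pi_{\min}) \beta$. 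Summing (i)--(iii) gives the first displayed bound.

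For the extended regret in (\ref{m:iid:eqn:regret2}) I would then add the computation-cost term from Lemma \ref{m:iid2:lemma:computation} and the switching-cost term from Lemma \ref{lemma:fb_perf2_2}. Because DLOE is analyzed in the Markovian case with $a=2$, $b=4$, $c=4$, the generic expression $\log_b((b-1)(t_l - N')/a)$ collapses to $\log_4(3(t_l - N')/2)$, which matches the stated bound. The $(1 + C_{swc})$ factors multiplying the exploration and bad-block terms arise by pairing, slot-for-slot, each reward-loss term with the worst-case additional switching cost that could be charged during that same slot.

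The genuine technical obstacle has already been absorbed upstream: bounding the Markovian sample-mean deviation probability requires partitioning the $N^i_{k,n}(t_l) \ge L\log t_l$ observations into contiguous segments, applying Lezaud's bound (Lemma \ref{intro:lemma:lezaud}) on each segment, and union-bounding across the $O(\log t_l)$ such segments; this is what dictates the lower bound $L \ge 50 S^2_{\max} r_{\Sigma,\max}^2/((3-2\sqrt{2})\upsilon_{\min})$ and produces the multiplicative factor $(1/\log 2 + \sqrt{2L}/(10 r_{\Sigma,\min}))(S_{\max}/\pi_{\min})$ that propagates into every Markovian regret term. At the theorem-assembly stage the only care needed is to choose a single $L$ meeting both lower bounds simultaneously, the smaller one $L \ge 1/\epsilon^2$ guaranteeing validity of the stability criterion of Lemma \ref{lemma:stability}, and the larger one guaranteeing summability of the tail bound from Lemma \ref{intro:lemma:lezaud}; the maximum of the two appears in the hypothesis and makes both bounds hold in parallel.
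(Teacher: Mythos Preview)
Your proposal is correct and follows essentially the same approach as the paper's own proof, which simply states that the result follows from summing the regret terms from Lemmas \ref{m:iid2:lemma:regret_explore}, \ref{m:restless:lemma:regret_exploit}, \ref{m:restless:lemma:regret3}, \ref{m:iid2:lemma:computation} and \ref{lemma:fb_perf2_2} together with the parameter choice $a=2$, $b=4$. Your decomposition into exploration, bad exploitation blocks, and good exploitation blocks (plus computation and switching costs for the second bound) matches the paper's exactly, and your explanation of how the $(1+C_{swc})$ factors and the $\log_4(3(t_l-N')/2)$ term arise is accurate and more detailed than what the paper provides.
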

\begin{proof}
The result follows from summing the regret terms from Lemmas \ref{m:iid2:lemma:regret_explore}, \ref{m:restless:lemma:regret_exploit}, \ref{m:restless:lemma:regret3}, \ref{m:iid2:lemma:computation} and \ref{lemma:fb_perf2_2}, and the fact that $a=2$, $b=4$.
\end{proof}

Our results show that when initial synchronization between users is possible, logarithmic regret, which is the optimal order of regret even in the centralized case can be achieved. \revq{Moreover, the proposed algorithm does not need to know whether the rewards are i.i.d. or Markovian. It achieves logarithmic regret in both cases.}

\section{A distributed synchronized algorithm for user-specific rewards} \label{sec:com_alg}

In this section we consider the model where the resource rewards are user-dependent. Different from the previous section, where a user can compute the socially optimal allocation based only on its own estimates, with user-dependent rewards each user needs to know the estimated rewards of other users in order to compute the optimal allocation. We assume that users can communicate with each other, but this communication incurs a cost $C_{com}$. For example, in an OSA model, users are transmitter-receiver pairs that can communicate with each other on one of the available channels, even when no common control channel exists. In order for communication to take place, each user can broadcast a request for communication over all available channels. For instance, if users are using an algorithm based on deterministic sequencing of exploration and exploitation, then at the beginning, a user can announce the parameters that are used to determine the block lengths. \revq{This way, the users can decide on which exploration and exploitation sequences to use, so that all of them can start an exploration block or exploitation block at the same time.}
After this initial communication, before exploitation block, users share their perceived channel qualities with each other, and one of the users, which can be chosen in a round robin fashion, computes the optimal allocation and announces to each user the resource it should select in the optimal allocation.
\revq{Next, we propose the algorithm distributed learning with communication (DLC) for this model.}

\subsection{Definition of DLC}

Similar to DLOE, DLC (see Figure \ref{fig:it_alg_com}) consists of geometrically increasing exploration and exploitation blocks. The predetermined exploration order allows each user to observe the reward from each resource-usage pair, and update the sample mean rewards of each resource-activity pair. \revq{Note that in this case, since feedback about $n^k_t$ is not needed, each user should be given the number of users using the same resource with it for each resource in the predetermined exploration order.} Similar to the previous section, this predetermined exploration order can be seen as an input to the algorithm from the algorithm designer. On the other hand, since communication between the users is possible, the predetermined exploration order can be determined by a user and then communicated to the other users, or users may collectively reach to an agreement over a predetermined exploration order by initial communication. In both cases, the initial communication will incur a constant cost $C_i$.

Let ${\cal N}_i$ be the exploration sequence of user $i$, which is defined the same way as in Section \ref{sec:fb_alg}, and ${\cal L}_i(z)$ be the number of users using the same resource with user $i$ in the $z$th slot of an exploration block.
Based on the initialization methods discussed above both ${\cal N}_i$ and ${\cal L}_i$ are known by the user $i$ at the beginning. Note that different from DLOE, DLC only uses the resource-usage pair reward estimates from the exploration blocks. 
However, if the user who computed the optimal allocation announces the resources assigned to other users as well, then rewards from the exploitation blocks can also be used to update the resource-usage pair reward estimates. 

Our regret analysis in the following sections shows that estimates from the exploration blocks alone are sufficient to achieve logarithmic regret.

\begin{figure}[ht]
\fbox {
\begin{minipage}{0.95\columnwidth}
{Distributed Learning with Communication (DLC) for user $i$}
\begin{algorithmic}[1]
\STATE{Initialize: $t=1$, $l_O = 0$, $l_I = 0$, $\eta=1$, $X_O=0$, $F=2$, $z=1$, $len=2$, $\hat{\mu}^i_{k,n}=0$, $N^i_{k,n}=0$, $\forall k \in {\cal K}, n \in \{1,2,\ldots,M\}$ $a,b,c \in \{2,3,\ldots\}$.}
\WHILE{$t \geq 1$}
\IF{$F=1$ //Exploitation block}
\STATE{Select resource $\alpha_i(t) = \alpha^*_i$.}
\IF{$\eta = len$}
\STATE{$F=0$}
\ENDIF
\ELSIF{$F=2$ //Exploration block}
\STATE{$\alpha_i(t) = {\cal N}_i'(z)$}
\STATE{$++N^i_{\alpha_i(t),{\cal L}_i(z)}$}
\STATE{\begin{align*}
\hat{\mu}^i_{\alpha_i(t),l_i(t)} &= \frac{(N^i_{\alpha_i(t),{\cal L}_i(z)}-1)\hat{\mu}^i_{\alpha_i(t),l_i(t)}+ r^i_{\alpha_i(t),l_i(t)}(t)}{N^i_{\alpha_i(t),{\cal L}_i(z)}}
\end{align*}}
\IF{$\eta = len$}
\STATE{$\eta=0$, $++z$}
\ENDIF
\IF{$z=|{\cal N}_i| + 1$}
\STATE{$X_O = X_O + len$}
\STATE{$F=0$}
\ENDIF
\ELSE
\STATE{//IF $F=0$}
\IF{$X_O \geq L \log t$}
\STATE{//Start an exploitation epoch}
\STATE{$F=1$, $++l_I$, $\eta=0$, $len = a \times b^{l_I - 1}$}
\STATE{//Communicate estimated channel qualities $\hat{\mu}^i_{k,n}$, $k \in {\cal K}$, $n \in {\cal M}$ with other users.}
\IF{ $(l_I \mod M)+1 = i$}
\STATE{//Compute the estimated optimal allocation, and sent each other user the resource it should use in the exploitation block.}
\STATE{$\boldsymbol{\alpha}^* = \arg\max_{\boldsymbol{\alpha} \in {\cal K}^M} \sum_{i=1}^M \hat{\mu}^i_{\alpha_i, n_{\alpha_i}(\boldsymbol{\alpha})}$}
\ELSE
\STATE{//Receive the resource that will be user $\alpha^*_i$ in the exploitation block from user $(l_I \mod M)+1$.}
\ENDIF
\ELSIF{$X_O < L \log t$}
\STATE{//Start an exploration epoch}
\STATE{$F=2$, $++l_O$, $\eta=0$, $len = c^{l_O - 1}$, $z=1$}
\ENDIF
\ENDIF
%
\STATE{$++\eta$, $++t$}
\ENDWHILE
\end{algorithmic}
\end{minipage}
}
\caption{pseudocode of DLC} \label{fig:it_alg_com}
\end{figure}

\subsection{Analysis of the regret of DLC}

In this section we bound the regret terms which are same for both i.i.d. and Markovian resource rewards.

\begin{lemma} \label{it_com_alg:lemma1}
For any $t>0$ which is in an exploitation block, regret of DLC due to explorations by time $t$ is at most
\begin{align*}
M N' L \log t.
\end{align*}
\end{lemma}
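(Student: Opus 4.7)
The plan is to observe that the block structure of DLC is identical to that of DLOE: an exploration block is triggered at the start of a block whenever the cumulative exploration length $N_O(t) = \sum_{l=1}^{l_O(t)} c^{l-1}$ is still below $L \log t$, the $l$th exploration block has length $N' c^{l-1}$, and the users run through the same predetermined exploration sequences ${\cal N}_i$. Consequently the bookkeeping inequalities already derived for DLOE apply verbatim to DLC, and no new probabilistic argument is needed.

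First I would recall inequality (\ref{m:iid2:eqn:expbound1}), which follows from the defining relation $\sum_{l=1}^{l_O(t)} c^{l-1} < L\log t$ at the start of an exploration block, and use it to bound the total time spent in exploration via the geometric-sum estimate in (\ref{m:iid2:eqn:expbound2}), giving
\begin{align*}
T_O(t) \;\leq\; \sum_{l=1}^{l_O(t)} N' c^{l-1} \;\leq\; N' L \log t .
\end{align*}
The only thing to check is that this bound also holds for DLC at a time $t$ that lies in an exploitation block; but the bound is derived from the switching rule $N_O(t) < L\log t$ that is evaluated at the start of every block, and this rule is unchanged in DLC. Hence $T_O(t) \leq N' L \log t$ for every $t$ that belongs to an exploitation block.

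Next I would convert the bound on exploration time into a bound on regret. Since the resource rewards are bounded in $(0,1]$, the instantaneous loss suffered by any single user in any slot (compared with the per-user share of the optimal static value) is at most $1$. Summing over the $M$ users and over the at most $N' L \log t$ exploration slots gives the per-user, per-slot upper bound $M \cdot N' L \log t$, which is exactly the claimed bound. This step is essentially identical to the proof of Lemma \ref{m:iid2:lemma:regret_explore}, and no part of the argument is genuinely hard; the only subtlety (and the main thing worth emphasizing) is that it is the shared deterministic block schedule of DLC, inherited from the DLOE design, that lets us reuse the same counting argument without modification.
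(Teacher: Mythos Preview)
Your proposal is correct and follows essentially the same approach as the paper: the paper's proof simply observes that DLC shares the deterministic block structure of DLOE and invokes the bound (\ref{m:iid2:eqn:expbound2}) on $T_O(t)$ together with the argument of Lemma~\ref{m:iid2:lemma:regret_explore} (bounded rewards in $(0,1]$, so at most a unit loss per user per exploration slot). You have spelled out these steps in slightly more detail, but the logic is identical.
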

\begin{proof}
Since DLC uses deterministic sequencing of exploration and exploitation the same way as DLOE, the proof is same as the proof of Lemma \ref{m:iid2:lemma:regret_explore} by using the bound (\ref{m:iid2:eqn:expbound2}) for $T_O(t)$.
\end{proof}

Since communication takes place at the beginning of each exploitation block, it can be computed the same way as computation cost is computed for DLOE. Moreover, since resource switching is only done during exploration blocks or at the beginning of a new exploitation block, switching costs can also be computed the same way. The following lemma bounds the communication, computation and switching cost of DLC.

\begin{lemma} \label{it_com_alg:lemma2}
When users use DLC, at the beginning of the $l$th exploitation block, the regret terms due to communication, computation and switching are upper bounded by
\begin{align*}
C_{com} M \log_b \left(\frac{b-1}{a} (t_l-N')\right)+ C_i, \\
C_{cmp} M \log_b \left(\frac{b-1}{a} (t_l-N')\right), \\
C_{swc} M \left( \log_b \left(\frac{b-1}{a} (t_l-N')\right) +  N' L \log t_l \right),
\end{align*}
respectively, where $C_i$ is the cost of initial communication.
\end{lemma}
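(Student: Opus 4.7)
The plan is to leverage the block structure of DLC and to count, for each type of cost, how many events of that type can occur by the beginning of the $l$th exploitation block. The key quantitative input is the bound (\ref{m:iid2:eqn:expbound3}) already derived in the proof of Lemma \ref{m:iid2:lemma:computation}, which gives that the number of completed exploitation blocks by time $t_l$ is at most
\begin{align*}
l_I(t_l) \leq \log_b\left( \frac{b-1}{a}(t_l - N') \right).
\end{align*}

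First, I would handle the communication cost. By construction, the only moments at which users exchange information in DLC are: (i) the one-shot initialization phase (cost $C_i$); and (ii) the beginning of every exploitation block, at which each of the $M$ users broadcasts its estimates $\hat{\mu}^i_{k,n}$ and then receives back the resource it should play. Treating each user's participation in a single round of exchange as one communication event, the total number of communications by time $t_l$ is at most $M \cdot l_I(t_l)$, and the stated bound $C_{com} M \log_b\!\left(\frac{b-1}{a}(t_l-N')\right) + C_i$ follows from multiplying by $C_{com}$ and adding $C_i$.

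Next, for the computation cost, I observe that the optimization $\arg\max_{\boldsymbol{\alpha}} \sum_i \hat{\mu}^i_{\alpha_i, n_{\alpha_i}(\boldsymbol{\alpha})}$ is performed at the start of each exploitation block. Although only the designated user $(l_I \mod M)+1$ actually computes in a given block, a loose per-block upper bound that assigns up to $M$ computations per exploitation block, summed over the at most $\log_b\!\left(\frac{b-1}{a}(t_l-N')\right)$ exploitation blocks, yields the stated $C_{cmp} M \log_b\!\left(\frac{b-1}{a}(t_l-N')\right)$.

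Finally, for the switching cost, I would decompose the switches into two disjoint sources. Inside exploitation blocks each user holds the resource $\alpha^*_i$ prescribed by the computed allocation for the entire block, so no switching occurs there; switches can only happen (a) during exploration blocks, in which different resources are sequenced, and (b) at the transition into an exploitation block when the chosen optimal allocation differs from what the user was doing. For (a), by Lemma \ref{m:iid2:lemma:regret_explore} the total time spent in exploration blocks is at most $N' L \log t_l$, so each user switches at most $N' L \log t_l$ times there; summing over $M$ users gives $M N' L \log t_l$ switches. For (b), in the worst case all $M$ users switch at each of the at most $\log_b\!\left(\frac{b-1}{a}(t_l - N')\right)$ exploitation-block boundaries. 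Combining and multiplying by $C_{swc}$ yields the claimed bound. The whole argument is essentially accounting: the only place care is needed is ensuring switches in exploration blocks are not double-counted with the boundary switch into the next exploitation block, which is why I prefer the (slightly loose) bound that charges each exploration step and each exploitation-block start separately.
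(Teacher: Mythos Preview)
Your proposal is correct and follows essentially the same approach as the paper: count communication, computation, and switching events using that communication and computation happen only at exploitation-block starts (bounded via (\ref{m:iid2:eqn:expbound3})) and that switches occur only in exploration blocks (bounded via (\ref{m:iid2:eqn:expbound2})) or at exploitation-block starts. The only minor remark is that the exploration-time bound $T_O(t)\le N'L\log t$ you invoke is stated as (\ref{m:iid2:eqn:expbound2}) rather than in Lemma~\ref{m:iid2:lemma:regret_explore} itself, but that is just a citation nit.
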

\begin{proof}
Communication is done initially and at the beginning of exploitation blocks. Computation is only performed at the beginning of exploitation blocks. Switching is only done at exploration blocks or at the beginning of exploitation blocks. Number of exploitation blocks is bounded by (\ref{m:iid2:eqn:expbound3}), and time slots in exploration blocks is bounded by (\ref{m:iid2:eqn:expbound2}).
\end{proof} 

In the next subsections we analyze the parts of regret that are different for i.i.d. and Markovian rewards.
\subsection{Analysis of regret for the i.i.d. problem} \label{it_com_perf1}

In this subsection we analyze the regret of DLC in the i.i.d. model. The analysis is similar with the user-independent reward model given in Section \ref{sec:fb_alg}.
\begin{lemma} \label{lemma:it_com_perf1_1}
Under the i.i.d. model, when each agent uses DLC with constant $L \geq 1/\epsilon^2$, regret due to incorrect calculations of the optimal allocation at the beginning of the $l$th exploitation block is at most
\begin{align*}
M^3 K (log(t_l) + 1).
\end{align*}
\end{lemma}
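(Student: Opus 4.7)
The plan is to mirror the argument of Lemma \ref{m:iid2:lemma:regret_exploit}, but with the user-specific stability condition from Lemma \ref{lemma:it_probform1} in place of the user-independent one. The key simplification for DLC is that the designated user computes the optimal allocation and broadcasts assignments, so once all users have accurate estimates no randomization or settling is required; the only loss from a ``bad'' exploitation block is the length of that block itself (times $M$, since rewards lie in $(0,1]$).

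First, I would define $H(t_l)$ as the event that at the beginning of the $l$th exploitation block at least one user $i$ has some sample-mean estimate $\hat{\mu}^i_{k,n}(N^i_{k,n}(t_l))$ deviating from the true mean $\mu^i_{k,n}$ by at least $\epsilon := \Delta_{\min}/(2M)$. By Lemma \ref{lemma:it_probform1}, on the complement of $H(t_l)$ the user tasked with the computation produces $\boldsymbol{\alpha}^* \in {\cal A}$, and since the assignments are communicated, every user plays the optimal allocation throughout the block with no randomization loss.

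Second, I would apply a union bound over the $M \cdot K \cdot M = M^2 K$ choices of $(i,k,n)$ and use the identity $P(|a-b|\geq \epsilon) = 2 P(a-b \geq \epsilon)$ (as in equations (\ref{m:iid2:eqn:inequality1}) and (\ref{m:iid2:eqn:equality1})) to reduce to one-sided deviations. Since the block boundary criterion $N_O(t_l) \geq L \log t_l$ together with DLC's exploration structure ensures $N^i_{k,n}(t_l) \geq L \log t_l$ for every resource-usage pair observed during exploration, I can apply the Chernoff--Hoeffding bound to the i.i.d. per-visit rewards exactly as in (\ref{m:iid2:eqn:timebound}). With $L \geq 1/\epsilon^2$ this yields each term bounded by $t_l^{-2}$, so $P(\omega \in H(t_l)) \leq M^2 K / t_l^2$.

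Third, I would convert this into a regret bound for the $l$th exploitation block. Its length is $ab^{l-1} \leq t_l$ by (\ref{m:iid2:eqn:expbound3}), and the per-slot reward loss is at most $M$, so the expected regret of block $l$ attributable to incorrect calculation is at most $M \cdot ab^{l-1} \cdot P(\omega \in H(t_l)) \leq M^3 K / t_l$. Summing over all exploitation blocks completed by $t_l$ and using $\sum_{z=1}^{l} 1/t_z \leq \sum_{t=1}^{t_l} 1/t \leq \log(t_l)+1$ gives the claimed bound $M^3 K(\log(t_l)+1)$.

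No real obstacle arises; the argument is essentially routine once Lemma \ref{lemma:it_probform1} is in place. The only subtlety worth double-checking is that DLC's exploration sequences deliver at least $L\log t_l$ observations of every pair $(k,n)$ for every user $i$ by the start of exploitation block $l$, so that the Chernoff--Hoeffding bound can be applied uniformly across all $M^2 K$ triples; this is guaranteed by the construction of ${\cal N}_i$ and ${\cal L}_i$ together with the entry condition $N_O(t_l) \geq L \log t_l$.
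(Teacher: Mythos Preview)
Your proposal is correct and follows essentially the same route as the paper: define the bad event $H(t_l)$ via the deviation condition of Lemma \ref{lemma:it_probform1}, apply a union bound over the $M^2K$ triples $(i,k,n)$ together with the Chernoff--Hoeffding inequality (using $N^i_{k,n}(t_l)\geq L\log t_l$) to get $P(H(t_l))\leq M^2K/t_l^2$, and then sum $M\cdot ab^{l-1}\cdot P(H(t_l))$ over exploitation blocks exactly as in Lemma \ref{m:iid2:lemma:regret_exploit}. The only cosmetic difference is that the paper defines $H(t_l)$ as the event that the designated computing user obtains a wrong allocation and then passes to the deviation superset, whereas you take the deviation event directly; the bounds are identical.
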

\begin{proof}
Let $H(t_l)$ be the event that at the beginning of the $l$th exploitation block, the estimated optimal allocation calculated by user $(l \mod M)+1$ is different from the true optimal allocation. Let $\omega$ be a sample path of the stochastic process generated by the learning algorithm and the stochastic arm rewards. The event that user $(l \mod M)+1$ computes the optimal allocation incorrectly is a subset of the event
\begin{align*}
\{|\hat{\mu}^i_{k,n}(N^i_{k,n}(t_l)) - \mu^i_{k,n}| \geq \epsilon \textrm{ for some } i \in {\cal M}, k \in {\cal K}, n \in {\cal M}  \}.
\end{align*}
Analysis follows from using a union bound, taking the expectation, and then using a Chernoff-Hoeffding bound. Basically, it follows from (\ref{m:iid2:eqn:inequality1}) in Lemma \ref{m:iid2:lemma:regret_exploit}.
\end{proof}

The following lemma bounds the expected number of exploitation blocks in which the optimal allocation is calculated incorrectly.
\begin{lemma} \label{lemma:it_com_perf1_2}
When agents use DLC with $L \geq 1/\epsilon^2$, the expected number exploitation blocks up to any $t$ in which the optimal allocation is calculated incorrectly is bounded by
\begin{align*}
E\left[\sum_{l=1}^\infty I(\omega \in H(t_l))\right] \leq \sum_{l=1}^\infty \frac{M^2K}{t^2_l} \leq M^2 K \beta,
\end{align*}
where $\beta = \sum_{t=1}^\infty 1/t^2$.
\end{lemma}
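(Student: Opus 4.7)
The plan is to mirror the proof of Lemma \ref{m:iid2:lemma:numberbound1}, substituting the DLC setting (only one user computes the allocation at the beginning of each exploitation block) for the DLOE setting (every user computes). The key ingredient is already extracted inside the proof of Lemma \ref{lemma:it_com_perf1_1}: the probability that user $(l \bmod M)+1$ computes the optimal allocation incorrectly at the beginning of the $l$-th exploitation block is bounded by $M^2 K/t_l^{2}$. I will simply reuse this tail bound and sum over blocks.

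First, I would fix the sample path $\omega$ and observe that $I(\omega \in H(t_l))$ is a well-defined indicator on the stochastic process generated by DLC together with the resource rewards. By monotone convergence (or equivalently Tonelli for nonnegative integrands), the expectation and the infinite sum may be exchanged, giving
\begin{align*}
E\left[\sum_{l=1}^\infty I(\omega \in H(t_l))\right] = \sum_{l=1}^\infty P(\omega \in H(t_l)).
\end{align*}
Then, as in the proof of Lemma \ref{lemma:it_com_perf1_1}, a union bound over $i,k,n$ combined with the Chernoff--Hoeffding bound applied to each sample mean $\hat{\mu}^i_{k,n}(N^i_{k,n}(t_l))$ (using $N^i_{k,n}(t_l) \geq L \log t_l$ at the start of an exploitation block and $L \geq 1/\epsilon^2$) gives $P(\omega \in H(t_l)) \leq M^2 K / t_l^2$ exactly as in equation \eqref{m:iid2:eqn:hbound}.

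Finally, since the block-start times $t_1 < t_2 < \cdots$ are distinct positive integers, the sequence $\{1/t_l^2\}_{l\geq 1}$ is majorized term-by-term by the subsequence of $\{1/t^2\}_{t \geq 1}$ obtained at those indices, so
\begin{align*}
\sum_{l=1}^\infty \frac{M^2 K}{t_l^2} \leq M^2 K \sum_{t=1}^\infty \frac{1}{t^2} = M^2 K \beta,
\end{align*}
which yields the claimed bound.

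The only conceptually delicate point, and the one I would take care with, is the reduction of the event that \emph{some} user's estimate is inaccurate to the event that the single designated user $(l \bmod M)+1$ computes incorrectly: since that user bases its computation on its own estimates $\hat{\mu}^{(l \bmod M)+1}_{k,n}$, the union bound need only range over $(k,n)$ rather than over all $i$ as well. This would actually permit the tighter bound $MK/t_l^2$; I will nevertheless retain the looser $M^2 K/t_l^2$ so as to match the statement of the lemma verbatim. No other step is subtle, so the proof will end with the two-line chain shown in the lemma statement.
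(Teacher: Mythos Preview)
Your main argument is correct and matches the paper's own proof, which simply refers back to Lemma~\ref{m:iid2:lemma:numberbound1} and the bound $P(\omega\in H(t_l))\le M^2K/t_l^2$ from \eqref{m:iid2:eqn:hbound}.

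One correction to your closing remark: in the DLC model the rewards are user-specific, so the designated user $(l\bmod M)+1$ computes $\arg\max_{\boldsymbol{\alpha}}\sum_{i=1}^M \hat{\mu}^i_{\alpha_i,n_{\alpha_i}(\boldsymbol{\alpha})}$ using the estimates $\hat{\mu}^i_{k,n}$ communicated from \emph{all} users $i$, not just its own. Hence the union bound must still range over all $i\in{\cal M}$, and the $M^2K/t_l^2$ bound is the right one, not a loosening of $MK/t_l^2$. This does not affect your proof, since you retain the $M^2K$ factor anyway.
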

\begin{proof}
Please see the proof of Lemma \ref{m:iid2:lemma:numberbound1}.
\end{proof}

Combining all the results above we have the following theorem.
\begin{theorem} \label{m:iid2:maintheorem}
If all agents use DLC with $L \geq 1/\epsilon^2$, at the beginning of $l$th exploitation block, the regret is upper bounded by,
\begin{align*}
(M N'L (1 + C_{swc}) + M^3 K) \log(t_l) + (C_{com} + C_{cmp} + C_{swc}) M \log_b \left(\frac{b-1}{a} (t_l-N')\right) + M^3 K.
\end{align*}
\end{theorem}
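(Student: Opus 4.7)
The plan is to consolidate the previously proved lemmas, since each of the contributing regret terms has already been bounded in isolation. First I would enumerate the sources of regret for DLC in the i.i.d. case: (i) reward loss during exploration blocks, (ii) reward loss in exploitation blocks where the designated round-robin user miscomputes the estimated optimal allocation, (iii) communication cost, (iv) computation cost, and (v) switching cost. A key structural simplification relative to the DLOE analysis is that in DLC only one user solves the combinatorial allocation problem and then broadcasts the assigned resource to every other user, so once the optimal allocation is computed correctly the users instantly lock in to it; this eliminates the analogue of Lemmas \ref{lemma:fb_perf1_1}--\ref{m:iid2:lemma:regret3} (the randomization-to-settle term $O_B$) that appeared in the user-independent setting.

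Next I would bound each term using the lemmas already established. By Lemma \ref{it_com_alg:lemma1}, the exploration contribution is at most $MN'L\log t_l$. By Lemma \ref{lemma:it_com_perf1_1}, the expected reward loss from bad exploitation blocks (blocks where the round-robin computer has $|\hat{\mu}^i_{k,n}-\mu^i_{k,n}|\geq \epsilon$ for some $(k,n)$) is at most $M^3 K(\log t_l + 1)$, where the exploration constant $L\geq 1/\epsilon^2$ is what drives the Chernoff-Hoeffding tail from $\exp(-2L\epsilon^2\log t_l)$ to $t_l^{-2}$ so that the per-block deviation probability is summable after weighting by the geometric block length. By Lemma \ref{it_com_alg:lemma2}, the communication, computation, and switching costs are respectively bounded by
\begin{align*}
C_{com} M \log_b\!\left(\tfrac{b-1}{a}(t_l - N')\right),\ C_{cmp} M \log_b\!\left(\tfrac{b-1}{a}(t_l - N')\right),\ C_{swc} M\!\left(\log_b\!\left(\tfrac{b-1}{a}(t_l - N')\right) + N'L\log t_l\right).
\end{align*}

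Finally, I would add the five bounds and collect like terms. The $\log t_l$ terms combine into $\bigl(MN'L(1+C_{swc}) + M^3K\bigr)\log t_l$ (the $C_{swc}$ factor multiplying $MN'L\log t_l$ arises because in the worst case the exploration sequence forces a switch in every exploration slot), the $\log_b(\cdot)$ terms combine into $(C_{com}+C_{cmp}+C_{swc})M\log_b\bigl(\tfrac{b-1}{a}(t_l-N')\bigr)$, and the residual constant from Lemma \ref{lemma:it_com_perf1_1} contributes the trailing $M^3K$. No genuinely new estimation is required beyond the consolidation, so I do not anticipate a hard step; the only thing to watch for is making sure that the exploitation rounds that \emph{do} land on the true optimal allocation contribute zero suboptimality (which they do because DLC's deterministic assignment broadcast avoids any intra-block randomization), so no extra per-good-block term analogous to $O_B$ appears.
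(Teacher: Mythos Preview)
Your proposal is correct and mirrors the paper's own proof, which simply states that the result follows from combining Lemmas \ref{it_com_alg:lemma1}, \ref{it_com_alg:lemma2}, \ref{lemma:it_com_perf1_1}, and \ref{lemma:it_com_perf1_2}. Your observation that no $O_B$-type randomization term appears (because DLC broadcasts the assignment deterministically) is exactly the structural point that distinguishes this consolidation from the DLOE case, and your term-by-term bookkeeping matches the stated bound.
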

\begin{proof}
The result follows from combining the results of Lemmas \ref{it_com_alg:lemma1}, \ref{it_com_alg:lemma2}, \ref{lemma:it_com_perf1_1} and \ref{lemma:it_com_perf1_2}.
\end{proof} 

\subsection{Analysis of regret for the Markovian problem} \label{tekin:sec:restless-sync}

We next analyze the regret of DLC in the Markovian model. The analysis in this section is similar to the ones in Section \ref{tekin:sec:restless-sync}. We assume that DLC is run with parameters $a=2$, $b=4$, $c=4$.

\begin{lemma} \label{lemma:it_com_perf2_1}
Under the Markovian model, when each user uses DLC with constant
\begin{align*}
L \geq \max \{1/\epsilon^2, 50 S^2_{\max} r_{\Sigma, \max}^2/ ((3-2\sqrt{2})\upsilon_{\min})\},
\end{align*}
the regret due to incorrect calculations of the optimal allocation at the beginning of the $l$th exploitation block is at most
\begin{align*}
M^3 K \left( \frac{1}{\log 2} + \frac{\sqrt{2L}} {10 r_{\Sigma, \min}} \right) \frac{S_{\max}}{\pi_{\min}} (log(t_l) + 1).
\end{align*}
\end{lemma}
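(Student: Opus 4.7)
The plan is to follow the same template as Lemma \ref{m:restless:lemma:regret_exploit}, with the observation that in DLC only the designated user (indexed by $(l \bmod M)+1$) performs the optimization at the start of the $l$th exploitation block, but that user uses the \emph{communicated} sample means of all users. Hence, by Lemma \ref{lemma:it_probform1}, the event $H(t_l)$ that the computed allocation is incorrect is again contained in
\begin{align*}
\{\, |\hat\mu^i_{k,n}(N^i_{k,n}(t_l)) - \mu^i_{k,n}| \geq \epsilon \text{ for some } i \in {\cal M}, k \in {\cal K}, n \in {\cal M}\,\},
\end{align*}
where $\epsilon = \Delta_{\min}/(2M)$. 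A union bound together with symmetry of the one-sided tail yields
\begin{align*}
P(\omega \in H(t_l)) \leq 2\sum_{i,k,n} P(\hat\mu^i_{k,n}(N^i_{k,n}(t_l)) - \mu^i_{k,n} \geq \epsilon).
\end{align*}

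The main obstacle, and the step where the Markovian analysis departs from the i.i.d.\ case, is that each $\hat\mu^i_{k,n}$ is a sample mean over observations of $(k,n)$ collected in multiple, non-contiguous chunks across exploration blocks (one chunk per exploration block in which user $i$ visits $(k,n)$, of length $c^{l-1}$). I would handle this exactly as in the proof of Lemma \ref{m:restless:lemma:regret_exploit}: slice the $N^i_{k,n}(t_l)$ observations into at most $l_O(t_l) = O(\log t_l)$ contiguous segments, and for each segment apply the large deviation bound for Markov chains from Lemma \ref{intro:lemma:lezaud}. Since $t_l$ is the start of an exploitation block, $N^i_{k,n}(t_l) \geq L \log t_l$, so $N^i_{k,n}(t_l) \geq \sqrt{N^i_{k,n}(t_l)\, L \log t_l}$, which lets me convert the deviation from the mean into a deviation scaled by $\sqrt{N^i_{k,n}(t_l)\, L \log t_l}$. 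Then the segmentation argument with block parameters $a=2$, $b=4$, $c=4$ and the hypothesis
\begin{align*}
L \geq \max\{1/\epsilon^2,\; 50 S^2_{\max} r_{\Sigma,\max}^2/((3-2\sqrt2)\upsilon_{\min})\}
\end{align*}
yields, exactly as in \cite{liuliu2010} and as reused in Lemma \ref{m:restless:lemma:regret_exploit},
\begin{align*}
P(\omega \in H(t_l)) \leq M^2 K \left(\frac{1}{\log 2} + \frac{\sqrt{2L}}{10 r_{\Sigma,\min}}\right)\frac{S_{\max}}{\pi_{\min}}\, t_l^{-2}.
\end{align*}

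To conclude, I would bound the per-block regret by $M \cdot a b^{l-1} \cdot P(\omega \in H(t_l))$ using that rewards are in $(0,1]$ and that all $M$ users could collect zero reward in a bad block, then use $a b^{l-1} \leq t_l$ from (\ref{m:iid2:eqn:expbound3}) so that $a b^{l-1}/t_l^2 \leq 1/t_l$, and finally sum $\sum_{l=1}^{l_I(t_l)} 1/t_l \leq \sum_{t=1}^{t_l} 1/t \leq \log(t_l)+1$ over all exploitation blocks up to time $t_l$. Combining these factors reproduces the stated bound
\begin{align*}
M^3 K \left(\frac{1}{\log 2} + \frac{\sqrt{2L}}{10 r_{\Sigma,\min}}\right)\frac{S_{\max}}{\pi_{\min}}\,(\log(t_l)+1).
\end{align*}
The only real subtlety relative to DLOE is noting that although only one user computes the allocation under DLC, its decision is a function of every user's sample mean, so the union bound must still run over all $(i,k,n)$; this is exactly what produces the extra $M$ factor (hidden in the $M^3$), so the final constant is identical to the DLOE case.
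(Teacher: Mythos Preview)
Your proposal is correct and follows essentially the same approach as the paper, whose proof simply reads ``The proof follows the proof of Lemma \ref{m:restless:lemma:regret_exploit}.'' You have faithfully unpacked that reference: the containment $H(t_l)\subset\{|\hat\mu^i_{k,n}-\mu^i_{k,n}|\geq\epsilon\text{ for some }i,k,n\}$ via Lemma \ref{lemma:it_probform1}, the union bound, the inequality $N^i_{k,n}(t_l)\geq\sqrt{N^i_{k,n}(t_l)\,L\log t_l}$, the segmentation/large-deviation argument for Markov chains borrowed from \cite{liuliu2010} under the stated choice of $L$ and block parameters $a=2$, $b=4$, $c=4$, and finally the summation using $ab^{l-1}\leq t_l$ exactly as in Lemma \ref{m:iid2:lemma:regret_exploit}. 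Your closing remark about why the union bound over all $i$ (and hence the $M^3$) survives even though only one user computes in DLC is a nice clarification that the paper leaves implicit.
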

\begin{proof}
The proof follows the proof of Lemma \ref{m:restless:lemma:regret_exploit}. 
\end{proof}

The following lemma bounds the expected number of exploitation blocks where some user computed the optimal allocation incorrectly.
\begin{lemma} \label{lemma:it_com_perf2_2}
Under the Markovian model, when each user uses DLC with constant
\begin{align*}
L \geq \max \{1/\epsilon^2, 50 S^2_{\max} r_{\Sigma, \max}^2/ ((3-2\sqrt{2})\upsilon_{\min})\},
\end{align*}
the expected number of exploitation blocks up to any $t$ in which there exists at least one user who computed the optimal allocation wrong is bounded by
\begin{align*}
E\left[\sum_{l=1}^\infty I(\omega \in H(t_l))\right]  \leq M^2 K \left( \frac{1}{\log 2} + \frac{\sqrt{2L}} {10 r_{\Sigma, \min}} \right) \frac{S_{\max}}{\pi_{\min}} \beta,
\end{align*}
where $\beta = \sum_{t=1}^\infty 1/t^2$.
\end{lemma}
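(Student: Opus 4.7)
The plan is to mirror the proof of Lemma \ref{m:restless:lemma:numberbound1} almost verbatim, substituting the bound obtained for $P(\omega \in H(t_l))$ in Lemma \ref{lemma:it_com_perf2_1}. The only change relative to the DLOE setting is that under DLC the event $H(t_l)$ concerns whether the single designated user $(l \bmod M)+1$, rather than any user, computes the optimal allocation incorrectly. Since this event is still contained in $\{|\hat{\mu}^i_{k,n}(N^i_{k,n}(t_l)) - \mu^i_{k,n}| \geq \epsilon \textrm{ for some } i,k,n\}$, the per-block bound derived in the proof of Lemma \ref{lemma:it_com_perf2_1},
\[
P(\omega \in H(t_l)) \leq M^2 K \left( \frac{1}{\log 2} + \frac{\sqrt{2L}}{10\, r_{\Sigma,\min}} \right) \frac{S_{\max}}{\pi_{\min}}\, t_l^{-2},
\]
applies unchanged under the stated choice of $L$.

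Next I would take expectation on the left-hand side and exchange it with the infinite sum by the monotone convergence theorem, giving
\[
E\left[\sum_{l=1}^\infty I(\omega \in H(t_l))\right] = \sum_{l=1}^\infty P(\omega \in H(t_l)).
\]
Then I would use the fact that the starting times $t_1 < t_2 < \cdots$ of the exploitation blocks are distinct positive integers, so $\sum_{l \geq 1} t_l^{-2} \leq \sum_{t=1}^\infty t^{-2} = \beta$. Multiplying through by the constant from the per-block bound yields the claimed inequality.

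There is essentially no obstacle here: the concentration work has already been done inside Lemma \ref{lemma:it_com_perf2_1} via the segmentation argument that invokes the Lezaud bound (Lemma \ref{intro:lemma:lezaud}), and the geometric growth of the block lengths is not even needed for this particular lemma, only the distinctness of the $t_l$'s. The only mildly subtle point worth flagging in the write-up is that although in the DLC model only one user (the one designated modulo $M$) actually has to produce the correct allocation at each exploitation block, we nonetheless union-bound over all $i \in \mathcal{M}$ in the concentration step, which accounts for the factor $M^2K$ rather than $MK$ in the constant, and matches the bound in Lemma \ref{lemma:it_com_perf2_1} that we are reusing.
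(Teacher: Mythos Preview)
Your proposal is correct and matches the paper's approach exactly: the paper simply says the proof follows that of Lemma~\ref{m:restless:lemma:numberbound1}, which in turn just sums the per-block bound $P(\omega\in H(t_l))\le M^2K\big(\tfrac{1}{\log 2}+\tfrac{\sqrt{2L}}{10\,r_{\Sigma,\min}}\big)\tfrac{S_{\max}}{\pi_{\min}}t_l^{-2}$ over $l$ and uses $\sum_l t_l^{-2}\le\beta$. Your additional remarks about monotone convergence and about the designated user in DLC are accurate elaborations of details the paper leaves implicit.
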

\begin{proof}
The proof follows the proof of Lemma \ref{m:restless:lemma:numberbound1}.
\end{proof}

Combining all the results above we have the following theorem.
\begin{theorem} \label{m:restless:maintheorem1}
Under the Markovian model, when each user uses DLC with constant
\begin{align*}
L \geq \max \{1/\epsilon^2, 50 S^2_{\max} r_{\Sigma, \max}^2/ ((3-2\sqrt{2})\upsilon_{\min})\},
\end{align*}
then at the beginning of the $l$th exploitation block, the regret is upper bounded by
\begin{align*}
&\left( M N' L (1+C_{swc}) + M^3 K \left( \frac{1}{\log 2} + \frac{\sqrt{2L}} {10 r_{\Sigma, \min}} \right) \frac{S_{\max}}{\pi_{\min}} \right) \log(t_l) \\
&+ (C_{com} + C_{cmp} + C_{swt}) M \log_4 \left(\frac{3}{2} (t_l-N')\right) + M^3 K \left( \frac{1}{\log 2} + \frac{\sqrt{2L}} {10 r_{\Sigma, \min}} \right) \frac{S_{\max}}{\pi_{\min}}  ( \beta(C_{\boldsymbol{P}}) + 1),
\end{align*}
where $C_{\boldsymbol{P}} = \max_{k \in {\cal K}} C_{P^k}$ where $C_P$ is the constant that depends on the transition probability matrix $P$.
\end{theorem}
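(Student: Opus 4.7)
The plan is to decompose the regret at the beginning of the $l$th exploitation block into four additive contributions and bound each by invoking an already-established lemma, then sum with parameters $a=2, b=4$. The four contributions are: (i) loss incurred during exploration blocks, (ii) communication, computation and switching costs, (iii) loss incurred in \emph{bad} exploitation blocks, i.e., those in which the designated user miscomputed the optimal allocation, and (iv) residual loss in \emph{good} exploitation blocks due to the transient non-stationarity of the Markov chains. A key structural observation that simplifies the argument relative to the DLOE theorem is that in DLC the user responsible for solving the optimization problem broadcasts each user's assigned resource directly, so once the allocation is computed correctly the users immediately settle onto it; there is no hitting-time term analogous to $O_B$.

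For category (i), Lemma \ref{it_com_alg:lemma1} gives an exploration-time bound of $MN'L\log t_l$, and the same $N'L\log t_l$ slots contribute to switching via Lemma \ref{it_com_alg:lemma2}, producing the combined coefficient $MN'L(1+C_{swc})\log t_l$ on $\log t_l$. For category (ii), I would read off from Lemma \ref{it_com_alg:lemma2} the three $\log_b((b-1)(t_l-N')/a)$ terms for communication, computation and the exploitation-block switches and merge them, specializing to $a=2$, $b=4$ so that the base-$b$ logarithm becomes $\log_4(3(t_l-N')/2)$; the initial-communication constant $C_i$ gets absorbed into the additive constants.

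For category (iii), Lemma \ref{lemma:it_com_perf2_1} already provides the bound $M^3K(1/\log 2 + \sqrt{2L}/(10 r_{\Sigma,\min}))(S_{\max}/\pi_{\min})(\log t_l + 1)$, which contributes both a $\log t_l$ term (grouped with the exploration term to form the first line of the statement) and a $+1$ additive constant (grouped with the category (iv) constant). For category (iv), I would argue that in a good exploitation block of length $ab^{l-1}$, since the users move directly to the optimal joint allocation and then hold it, the only loss comes from the fact that each selected resource's Markov chain may not start at its stationary distribution. By the standard bound that the difference between the expected $t$-horizon reward of an irreducible aperiodic Markov chain with an arbitrary initial distribution and $t$ times its stationary reward is bounded by a transition-matrix-dependent constant, summing over the $M$ users yields at most $MC_{\boldsymbol{P}}$ regret per good exploitation block. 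Multiplying this per-block constant by the expected number of exploitation blocks in which a good outcome can occur, and using Lemma \ref{lemma:it_com_perf2_2} (together with $\sum_l t_l^{-2} \le \beta$) to handle the total count, produces the $\beta C_{\boldsymbol{P}}$ contribution inside the third parenthesized factor.

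Summing categories (i)--(iv) and collecting the shared prefactor $M^3K(1/\log 2 + \sqrt{2L}/(10r_{\Sigma,\min}))(S_{\max}/\pi_{\min})$ on both its $\log t_l$ incarnation and its constant incarnation $(\beta C_{\boldsymbol{P}} + 1)$ yields the bound as stated. The main bookkeeping obstacle is simply ensuring that each cost term is counted exactly once---in particular that exploitation-block switching appears in the $(C_{com}+C_{cmp}+C_{swc})M\log_4(\cdot)$ group while exploration-block switching is folded into the $(1+C_{swc})$ coefficient of $MN'L\log t_l$---and that the $O_B$ randomization term from the DLOE theorem is correctly \emph{absent} because explicit communication replaces the random hitting step.
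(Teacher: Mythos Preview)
Your proposal is correct and follows essentially the same route as the paper: the paper's proof simply says the bound follows from combining Lemmas \ref{it_com_alg:lemma1}, \ref{it_com_alg:lemma2}, \ref{lemma:it_com_perf2_1} and \ref{lemma:it_com_perf2_2} with $a=2$, $b=4$, together with the transient Markov effect producing at most $C_{\boldsymbol{P}}$ regret per relevant exploitation block, and you have identified exactly these ingredients and the absence of the $O_B$ hitting-time term. One small wording point in your category (iv): the $\beta$ factor attached to $C_{\boldsymbol{P}}$ comes from the bound on the expected number of \emph{bad} blocks in Lemma \ref{lemma:it_com_perf2_2} (the transient cost is incurred anew only when users have to re-settle after a bad block or an exploration block), not from the count of good blocks, but this does not change the final bound.
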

\begin{proof}
The result follows from combining results of Lemmas \ref{it_com_alg:lemma1}, \ref{it_com_alg:lemma2}, \ref{lemma:it_com_perf2_1} and \ref{lemma:it_com_perf2_2}, and the fact that $a=2$, $b=4$. Note that due to the transient effect that a resource may not be at its stationary distribution when it is selected, even when all users select resources according to the optimal allocation, a deviation of at most $C_{\boldsymbol{P}}$ from the expected total reward of the optimal allocation is possible. Therefore, at most $C_{\boldsymbol{P}}$ regret results from the transient effects in exploitation blocks where the optimal allocation is calculated correctly. The last term in the regret is a result of this.
\end{proof}

\section{Numerical Results} \label{sec:num}

In this section we consider an opportunistic spectrum access problem in a cognitive radio network consisting of $K=3$ channels and $M=3$ users. We model the primary user activity on each channel as an i.i.d. Bernoulli process with $\theta_k$ being the probability that there is no primary user on channel $k$. At each time step a secondary user senses a channel and transmits with code division multiple access (CDMA) scheme if there is no primary user on that channel. Therefore, when there is no primary user present, the problem reduces to a multi-channel CDMA wireless power control problem. If channel $k$ is not occupied by a primary user, the rate secondary user $i$ gets can be modeled by (see, e.g. \cite{tekin2012-2}),

\begin{eqnarray*}
\log \left(1 + \gamma \frac{h^k_{ii} P^k_i}{N_o + \sum_{j \neq i} h^k_{ji} P^k_j}\right),
\end{eqnarray*}
where $h^k_{ji}$ is the channel gain between transmitter of user $j$ and receiver of user $i$, $P^k_j$ is the transmit power of user $j$ on channel $k$, $N_o$ is the noise power, and $\gamma > 0$ is the spreading gain.

If a primary user is present on channel $k$, in order not to cause interference, secondary users should not transmit on that channel hence they get zero reward from that channel. We assume that the rate function is user-independent, i.e., $h^k_{ii} = \hat{h}^k, \forall i \in {\cal M}$, $h^k_{ji} = \tilde{h}^k, \forall i \neq j \in {\cal M}$, $P^k_i = P^k$, $\forall i \in {\cal M}$. Values of the parameters of the rate functions and primary user activity are given in Table \ref{table:newce}. We assume that $N_o=1$ and $\gamma=1$. In the optimal allocation under these values, the number of users on channels 1,2,3 is 0,2,1 respectively which is not an orthogonal allocation.

Each user applies the DLOE algorithm in a decentralized way. In Figure \ref{fig:figure1}, we plot regret/$\log t$ of DLOE under both definitions of the regret given in (\ref{m:iid:eqn:regret1}) and (\ref{m:iid:eqn:regret2}), respectively. Our results are averaged over $10$ runs of the algorithm. We took $\epsilon$ to be the half of the difference between value of the optimal allocation and the second best allocation which is $0.0811$. We simulate for two values of the exploration constant $L=1/\epsilon^2 = 152$ and $L=4/\epsilon^2 = 608$.  The computational cost of calculating the estimated optimal allocation is $C_{cmp}=100$. We observe that there is a constant difference between the two plots due to the logarithmic number of computations of the optimal allocation. In both plots, we observe an initial linear increase in regret. This is due to the fact that users start exploiting only after they have sufficiently many explorations, i.e., $X_O(t) > L \log t$. It can be deduced that if this difference decreases, a smaller $\epsilon$ should be chosen, which will result in a larger $L$ hence increase in the number of explorations. This will cause the initial linear region to expand, incresing the regret.

In Table \ref{table:newce2}, we give the percentage of times the optimal allocation is played. We observe that the users settle to the channels which are used in the optimal allocation, by estimating the optimal allocation correctly and by randomizing if there are more users on the channel than there should be in the estimated optimal allocation. The largest contribution to the regret comes from the initial exploration blocks. We see that after the initial exploration blocks the percentage of times steps in which the optimal allocation is played increases significantly, up to $90\%$ for $L=152$ and up to $60\%$ for $L=608$. Although the estimates of the mean rewards of channel-activity pairs is more accurate with a larger $L$, more time is spent in exploration, thus the average number of plays of the optimal allocation is smaller. It is clear that under both cases as time goes to infinity, the percentage of time slots in which optimal allocation is played approaches $100\%$. In Table \ref{table:newce3}, we observe the percentage of time steps that a channel is selected by user 1. We see that after the initial explorations, user 1 chooses most of the time the channels that are used by at least one user in the optimal allocation. We see that the percentage of times channel 1 is selected by user 1 falls below $4\%$, for $L=152$ and $15\%$ for $L=608$ at time $5 \times 10^5$.

\begin{table}
\begin{center}
\begin{tabular}{|r|c|c|c|}
\hline
channel & $1$ & $2$ & $3$\\
\hline
$\theta_k$ & $1/8$ & $1/3$ & $1/5$\\
\hline
$\hat{h}^k$ & $5$ & $10$ & $15$\\
\hline
$\tilde{h}^k$ & $1$ & $1.2$ & $3$\\
\hline
$P^k$ & $1$ & $1$ & $1$\\
\hline
\end{tabular}\medskip
\caption{Simulation parameters}
\label{table:newce}
\end{center}
\end{table}

\begin{figure}
\includegraphics[width=\columnwidth]{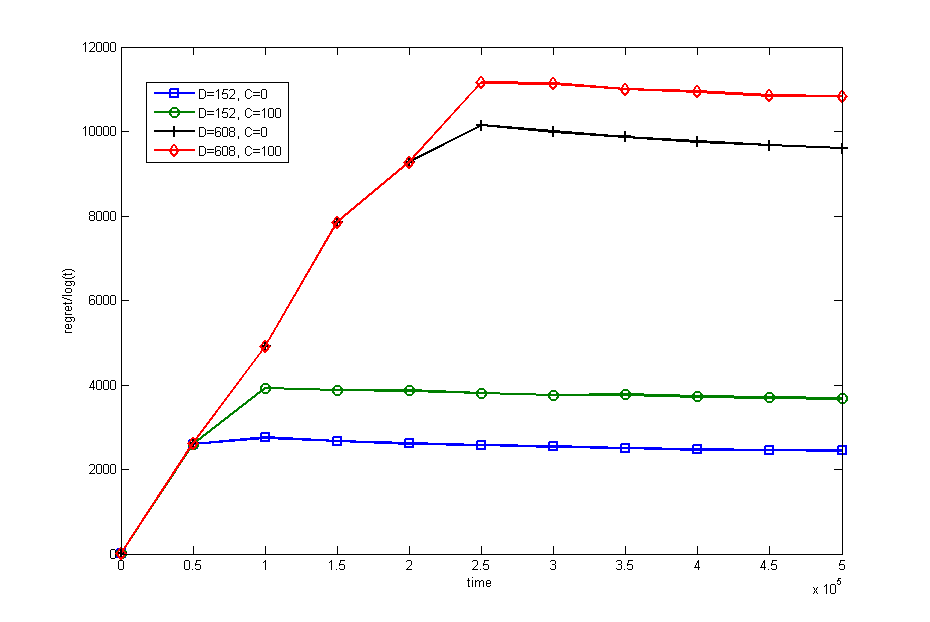}
\caption{Regret/$\log t$ for DLOE with and without computational cost.}
\label{fig:figure1}
\end{figure}

\begin{table}
\begin{center}
\begin{tabular}{|r|c|c|c|c|c|}
\hline
$t$ & $10^2$ & $10^3$ & $10^4$ & $10^5$ & $5 \times 10^5$\\
\hline
$L=152$ & $9$ & $9$ & $8$ & $50$ & $90$\\
\hline
$L=608$ & $9$ & $9$ & $7$ & $10$ & $60$\\
\hline
\end{tabular}\medskip
\caption{Percentage of time the optimal allocation is selected up to $t$.}
\label{table:newce2}
\end{center}
\end{table}

\begin{table}
\begin{center}
\begin{tabular}{|r|c|c|c|c|c|}
\hline

$t (D=152)$ & $10^2$ & $10^3$ & $10^4$ & $10^5$ & $5 \times 10^5$\\
\hline
Channel 1 & $46$ & $44$ & $46$ & $18$ & $4$\\
\hline
Channel 2 & $27$ & $28$ & $31$ & $41$ & $48$\\
\hline
Channel 3 & $27$ & $28$ & $23$ & $41$ & $48$\\
\hline
$t (D=608)$ & $10^2$ & $10^3$ & $10^4$ & $10^5$ & $5 \times 10^5$\\
\hline
Channel 1 & $46$ & $44$ & $46$ & $37$ & $15$\\
\hline
Channel 2 & $27$ & $28$ & $31$ & $37$ & $65$\\
\hline
Channel 3 & $27$ & $28$ & $23$ & $26$ & $20$\\
\hline
\end{tabular}\medskip
\caption{Percentage of channels selected by user 1 up to $t$.}
\label{table:newce3}
\end{center}
\end{table} 
\section{Discussion} \label{sec:diss}

In this section we comment on extensions of our algorithms to more general settings and relaxation of some assumptions we introduced in the previous sections. 

\subsection{Unknown sub-optimality gap}

Both algorithms DLOE and DLC requires that users know a lower bound $\epsilon$ on the difference between the estimated and true mean resource rewards for which the estimated and true optimal allocations coincide. Knowing this lower bound, DLOE and DLC chooses an {\em exploration constant} $L \geq 1/\epsilon^2$ so that after $N' L \log t$ time steps spent in exploration is enough to have reward estimates that are withing $\epsilon$ of the true rewards with a very high probability.

However, $\epsilon$ depends on the suboptimality gap $\delta$ which is a function of the true mean resource rewards which is unknown to the users at the beginning. This problem can be solved in the following way. Instead of using a constant {\em exploration constant} $L$, DLOE and DLC uses an increasing exploration constant $L(t)$ such that $L(1)=1$ and $L(t) \rightarrow \infty$ as $t \rightarrow \infty$. By this way the requirement that $L(t) \geq 1/\epsilon^2$ is satisfied after some finite number of time steps which we denote by $T_0$. In the worst case, $M T_0$ regret will come from these time steps where $L(t) < 1/\epsilon^2$. After $T_0$, only a finite (time-independent) regret will result from incorrect calculations of the optimal allocation due to the inaccuracy in estimates. Since both DLOE and DLC explores only if the least explored resource-congestion pair is explored less than $L(t) \log t$ times, regret due to explorations will be bounded by $M N' L(t) \log t$. Since the order of explorations with $L(t)$ is greater than with constant $L$, the order of exploitations is less than the case with constant $L$. Therefore, the order of regret due to incorrect calculations of the optimal allocation, switchings in exploitation blocks, computation and communication at the beginning of exploitation blocks after $T_0$ is less than the corresponding regret terms when $L$ is constant. Only the regret due to switchings in exploration blocks increases to $C_{swc} M N' L(t) \log t$. Therefore, instead of having $O(\log t)$ regret, without a lower bound on $\epsilon$, the proposed modification achieves $O(L(t) \log t)$ regret.

\subsection{Multiple optimal allocations}

For user-specific rewards with costly communication, the user who computed the estimated optimal allocation announces to other users which resources they should select. Since the users are cooperative and follow the rules of the algorithm, even if there are multiple optimal allocations, they will all use the allocation communicated in this way. The problem arises when there are multiple optimal allocations in the user-independent rewards with limited feedback case. According to DLOE, if there are multiple optimal allocations, even though if all users correctly find an optimal allocation, they may not pick the same optimal allocation since they cannot communicate with each other. To avoid this problem, we proposed Assumption \ref{ass:it_probform}, which guarantees the uniqueness of the optimal allocation. We now desribe a modification on DLOE so this assumption is no longer required.

\revq{Let $\epsilon := \Delta_{\min}/(2M)$, where $\Delta_{\min}$ is the minimum suboptimality gap given in (\ref{eqn:minsubopt}).} Consider user $i$, and an exploitation block $l$. For a resource which is in the set of estimated optimal resources ${\cal O}_i(l)$, if user $i$ observes at the last time slot of that exploitation block, number of users less than or equal to the number of users in the estimated optimal allocation on that resource, at the begining of the next exploitation block, it will increase the total estimated reward of that allocation by $\epsilon/4$. If this is not true, then it will decrease the total estimated reward of that allocation by $\epsilon/4$. Note that when the estimated rewards are accurate enough, i.e., within $\epsilon/2$ of the true resource rewards with a very high probability, even with an $\epsilon/4$ subsidy, a suboptimal allocation will not be chosen. Moreover by this way, among the optimal allocations, users will settle to the resources for which the number of users using it is at most the number of users using it in the optimal allocation. By this modification, after the mean reward estimates are accurate enough, the users will settle to one of the optimal allocations in finite expected number of exploitation blocks.

\section{Conclusion} \label{sec:conc}
In this paper, we proposed distributed online learning algorithms for decentralized multi-user resource sharing problems. We analyzed the performance of our algorithms, and proved that they achieve logarithmic regret under both i.i.d. and Markovian resource reward models when communication, computation and switching costs are present. We presented numerical analysis of dynamic spectrum access application  which is a resource sharing problem.
\bibliographystyle{IEEEtran}
\bibliography{cemreferences}

\end{document}